\documentclass{article}
\usepackage{arxiv}
\usepackage{microtype}
\usepackage{hyphenat}
\makeatother

\usepackage{lmodern}

\usepackage{microtype}
\usepackage{hyperref}
\usepackage{url}
\usepackage{booktabs}
\usepackage{minted}
\usepackage{float} 
\usepackage{microtype}
\usepackage{caption}
\usepackage{subcaption}
\usepackage{hyperref}
\usepackage{url}
\usepackage{amsthm}
\usepackage{bm}
\usepackage{bbm}

\newtheorem{proposition}{Proposition}

\usepackage{booktabs}
\usepackage{textcomp}
\usepackage{mathabx}
\usepackage{mathabx}
\usepackage{bbm}

\definecolor{darkblue}{rgb}{0, 0, 0.5}
\hypersetup{colorlinks=true, citecolor=darkblue, linkcolor=darkblue, urlcolor=darkblue}

\RequirePackage[T1]{fontenc}
\RequirePackage[tt=false, type1=true]{libertine}
\RequirePackage[varqu]{zi4}

\usepackage{etoolbox}
\usepackage{appendix}
\usepackage{xspace}
\newcommand{\eg}{e.g.\xspace}
\newcommand{\ie}{i.e.\xspace}

\usepackage{amsmath, amssymb, amsthm}
\usepackage{algorithm}
\usepackage{algorithmic}
\usepackage{hyperref}
\usepackage{booktabs}
\usepackage{enumitem}

\usepackage{tikz}
\usetikzlibrary{shapes.geometric, arrows.meta, positioning, calc, fit, backgrounds, patterns,patterns.meta,matrix, decorations.pathreplacing, decorations.pathmorphing,calc, 
                  shadows.blur, positioning}

\usepackage[normalem]{ulem}
\definecolor{border}{HTML}{1E293B}
\definecolor{borderm}{HTML}{334155}
\definecolor{vis}{HTML}{3B82F6}
\definecolor{visDark}{HTML}{1D4ED8}
\definecolor{visLight}{HTML}{DBEAFE}
\definecolor{visBg}{HTML}{EFF6FF}
\definecolor{evictA}{HTML}{EF4444}
\definecolor{evictLight}{HTML}{FEE2E2}
\definecolor{accent}{HTML}{10B981}
\definecolor{accentDark}{HTML}{047857}
\definecolor{accentLight}{HTML}{D1FAE5}
\definecolor{traincolor}{HTML}{7C3AED}
\definecolor{trainLight}{HTML}{EDE9FE}
\definecolor{labelc}{HTML}{475569}
\definecolor{gridkept}{HTML}{93C5FD}
\definecolor{promptc}{HTML}{64748B}
\definecolor{tokbg}{HTML}{F8FAFC}

\usepackage{amsmath,amsfonts,bm}

\def\eqref#1{equation~\ref{#1}}

\def\1{\bm{1}}

\DeclareMathAlphabet{\mathsfit}{\encodingdefault}{\sfdefault}{m}{sl}
\SetMathAlphabet{\mathsfit}{bold}{\encodingdefault}{\sfdefault}{bx}{n}

\usepackage{lineno}

\definecolor{darkblue}{rgb}{0, 0, 0.5}
\hypersetup{colorlinks=true, citecolor=darkblue, linkcolor=darkblue, urlcolor=darkblue}
\usepackage{tikz}
\usetikzlibrary{arrows.meta, positioning, decorations.pathreplacing, calc, patterns}

\definecolor{border}{HTML}{3B4252}
\definecolor{borderm}{HTML}{4C566A}
\definecolor{labelc}{HTML}{6B7394}
\definecolor{tokbg}{HTML}{F0F1F5}
\definecolor{vis}{HTML}{5E81AC}
\definecolor{visDark}{HTML}{3B6EA5}
\definecolor{visLight}{HTML}{D8E6F3}
\definecolor{visBg}{HTML}{E8F0F8}
\definecolor{evictA}{HTML}{BF616A}
\definecolor{evictLight}{HTML}{F2DEDE}
\definecolor{gridkept}{HTML}{A3BE8C}
\definecolor{promptc}{HTML}{8B7EC8}
\definecolor{accent}{HTML}{4CAF50}
\definecolor{accentDark}{HTML}{2E7D32}
\definecolor{accentLight}{HTML}{E8F5E9}
\definecolor{traincolor}{HTML}{D08C3E}
\definecolor{trainLight}{HTML}{FDF0E0}
\definecolor{budgetcolor}{HTML}{C0392B}
\definecolor{budgetBg}{HTML}{FDF2F0}
\definecolor{border}{HTML}{4A4A4A}
\definecolor{gradcol}{HTML}{C0392B}
\definecolor{autogradcol}{HTML}{E67E22}

\definecolor{sumcolor}{HTML}{E67E22}
\definecolor{sumLight}{HTML}{FDF2E9}
\definecolor{sumDark}{HTML}{BF6516}
\definecolor{sumKept}{HTML}{F0C27A}

\definecolor{vis}{HTML}{4A90D9}       
\definecolor{evictA}{HTML}{E05A4F}    
\definecolor{evictB}{HTML}{E8A735}    
\definecolor{fut}{HTML}{F0F0F0}       
\definecolor{border}{HTML}{4A4A6A}
\definecolor{labelc}{HTML}{3A3A5C}

\definecolor{metaBg}{HTML}{FFFBF5}
\definecolor{metaBorder}{HTML}{C05621}
\definecolor{metaText}{HTML}{9C4221}
\definecolor{dumpColor}{HTML}{7B2D8E}
\definecolor{dumpLight}{HTML}{F5E6FF}

\title{Neural Garbage Collection:\\ Learning to Forget while Learning to Reason}
\author{\normalsize
Michael Y.~Li\thanks{Correspondence to \texttt{michaelyli@stanford.edu}}\\
\normalsize
Stanford University
\And
\normalsize
Jubayer Ibn Hamid\\
\normalsize
Stanford University
\And
\normalsize
Emily B.~Fox\\
\normalsize
Stanford University
\And
\normalsize
Noah D.~Goodman\\
\normalsize
Stanford University
}

\begin{document}

\maketitle
\begin{abstract}
Chain-of-thought reasoning has driven striking advances in language model capability, yet every reasoning step grows the KV cache, creating a bottleneck to scaling this paradigm further.
Current approaches manage these constraints on the model’s behalf using hand-designed criteria. 
A more scalable approach would let end-to-end learning subsume this design choice entirely, following a broader pattern in deep learning. 
After all, if a model can learn to reason, why can't it learn to forget?
We introduce \textbf{Neural Garbage Collection (NGC)}, in which a language model learns to forget while learning to reason, trained \emph{end-to-end} from outcome-based task reward alone. 
As the model reasons, it periodically pauses, decides which KV cache entries to evict, and continues to reason conditioned on the remaining cache.
By treating tokens in a chain-of-thought and cache-eviction decisions as discrete actions sampled from the language model, we can use reinforcement learning to jointly optimize how the model reasons and how it manages its own memory: what the model evicts shapes what it remembers, what it remembers shapes its reasoning, and the correctness of that reasoning determines its reward.
Crucially, the model learns this behavior entirely from a \emph{single learning signal} — the outcome-based task reward — without supervised fine-tuning or proxy objectives.
On Countdown, AMC, and AIME tasks, NGC maintains strong accuracy relative to the full-cache upper bound at 2–3x peak KV cache size compression and substantially outperforms eviction baselines.
Our results are a first step towards a broader vision where end-to-end optimization drives both capability and efficiency in language models.
\end{abstract}
\section{Introduction}
\definecolor{cAmberFill}  {HTML}{FEF3C7}  \definecolor{cAmberStr}  {HTML}{F59E0B}  \definecolor{cAmberTxt}  {HTML}{92400E}
\definecolor{cRoseFill}   {HTML}{FFE4E6}  \definecolor{cRoseStr}   {HTML}{FB7185}  \definecolor{cRoseTxt}   {HTML}{9F1239}
\definecolor{cSkyFill}    {HTML}{DBEAFE}  \definecolor{cSkyStr}    {HTML}{60A5FA}  \definecolor{cSkyTxt}    {HTML}{1E3A8A}
\definecolor{cSkyBg}      {HTML}{EFF6FF}
\definecolor{cVioletFill} {HTML}{EDE9FE}  \definecolor{cVioletStr} {HTML}{8B5CF6}  \definecolor{cVioletTxt} {HTML}{5B21B6}
\definecolor{cGreenFill}  {HTML}{DCFCE7}  \definecolor{cGreen}     {HTML}{22C55E}  \definecolor{cGreenTxt}  {HTML}{166534}
 
\definecolor{tC0Fill}{HTML}{FECACA}  \definecolor{tC0Str}{HTML}{B91C1C}  \definecolor{tC0Txt}{HTML}{7F1D1D}
\definecolor{tC1Fill}{HTML}{FEE2E2}  \definecolor{tC1Str}{HTML}{EF4444}  \definecolor{tC1Txt}{HTML}{991B1B}
\definecolor{tC2Fill}{HTML}{FED7AA}  \definecolor{tC2Str}{HTML}{EA580C}  \definecolor{tC2Txt}{HTML}{7C2D12}
\definecolor{tC3Fill}{HTML}{FFEDD5}  \definecolor{tC3Str}{HTML}{F97316}  \definecolor{tC3Txt}{HTML}{9A3412}
\definecolor{tC4Fill}{HTML}{FEF3C7}  \definecolor{tC4Str}{HTML}{F59E0B}  \definecolor{tC4Txt}{HTML}{92400E}
\definecolor{tC5Fill}{HTML}{FEF9C3}  \definecolor{tC5Str}{HTML}{EAB308}  \definecolor{tC5Txt}{HTML}{854D0E}
\definecolor{tC6Fill}{HTML}{FBCFE8}  \definecolor{tC6Str}{HTML}{EC4899}  \definecolor{tC6Txt}{HTML}{831843}
\definecolor{tC7Fill}{HTML}{F5D0FE}  \definecolor{tC7Str}{HTML}{C026D3}  \definecolor{tC7Txt}{HTML}{701A75}
\definecolor{tC8Fill}{HTML}{E9D5FF}  \definecolor{tC8Str}{HTML}{9333EA}  \definecolor{tC8Txt}{HTML}{581C87}
\definecolor{cGray}       {HTML}{94A3B8}
\definecolor{cGrayL}      {HTML}{CBD5E1}
\definecolor{cDark}       {HTML}{334155}
\definecolor{cBlueStr}    {HTML}{3B82F6}
 \begin{figure}[h]

\begin{tikzpicture}[
  x=1cm, y=-1cm,
  tokbase/.style={
    thick, rounded corners=5pt,
    minimum height=0.58cm, minimum width=0.72cm,
    font=\small\bfseries,
    inner xsep=3pt, inner ysep=2pt,
  },
  gentok/.style={
    thick, rounded corners=5pt,
    fill=tC5Fill, draw=tC5Str, text=tC5Txt,
    minimum height=0.52cm, minimum width=0.42cm,
    font=\small\bfseries, inner sep=1pt,
  },
  lmbox/.style={
    draw=cGray, fill=white, thick, rounded corners=4pt,
    minimum width=0.90cm, minimum height=0.78cm, inner sep=2pt,
  },
  kvcell/.style 2 args={
    preaction={fill=#1, fill opacity=0.65},
    pattern=north east lines, pattern color=#2,
    draw=#2, line width=0.5pt,
    minimum width=0.26cm, minimum height=0.26cm, inner sep=0pt,
  },
  kvcellDead/.style 2 args={
    fill=white,
    draw=cGrayL, line width=0.4pt, dashed,
    minimum width=0.26cm, minimum height=0.26cm, inner sep=0pt,
  },
  arr/.style    ={-{Stealth[length=4.5pt,width=4pt]}, thick, color=cDark},
  arrbig/.style ={-{Stealth[length=7pt,width=6pt]}, line width=1.3pt, color=cDark},
  update/.style ={-{Stealth[length=4pt,width=4pt]}, thick, color=cBlueStr, dashed},
  reward/.style ={-{Stealth[length=5pt,width=4.5pt]}, line width=1.1pt, color=cGreen},
]
 
\def\yTok{1.25}
\def\yLa{1.95}   \def\yLb{2.40}   \def\yLc{2.85}
\def\yMid{2.40}
\def\yGen{-0.15}

\def\sxA{1.35}   \def\sxB{2.20}   \def\sxC{3.02}
\def\sxD{3.95}   \def\sxE{4.95}
 
\node[tokbase, fill=tC0Fill, draw=tC0Str, text=tC0Txt]  at (\sxA, \yTok) {Q:};
\node[tokbase, fill=tC1Fill, draw=tC1Str, text=tC1Txt, minimum width=0.78cm] at (\sxB, \yTok) {x+6};
\node[tokbase, fill=tC2Fill, draw=tC2Str, text=tC2Txt, minimum width=0.60cm]  at (\sxC, \yTok) {=7};
\node[tokbase, fill=tC3Fill, draw=tC3Str, text=tC3Txt,
      minimum width=1.00cm, font=\scriptsize\bfseries] at (\sxD, \yTok) {\texttt{<think>}};
\node[tokbase, fill=tC4Fill, draw=tC4Str, text=tC4Txt]  at (\sxE, \yTok) {hm};
 
\node[font=\scriptsize, text=cGray, anchor=east, inner sep=2pt] at (0.95, \yLa) {Layer 1};
\node[font=\scriptsize, text=cGray, anchor=east, inner sep=2pt] at (0.95, \yLb) {Layer 2};
\node[font=\scriptsize, text=cGray, anchor=east, inner sep=2pt] at (0.95, \yLc) {Layer 3};
 
\newcommand{\colcache}[3]{%
  \node[kvcell={#2}{#3}] at (#1, \yLa) {};
  \node[kvcell={#2}{#3}] at (#1, \yLb) {};
  \node[kvcell={#2}{#3}] at (#1, \yLc) {};%
}
\colcache{\sxA}{tC0Fill}{tC0Str}
\colcache{\sxB}{tC1Fill}{tC1Str}
\colcache{\sxC}{tC2Fill}{tC2Str}
\colcache{\sxD}{tC3Fill}{tC3Str}
\colcache{\sxE}{tC4Fill}{tC4Str}

 \node[font=\Large\bfseries, text=cDark, anchor=south]
at (8.5, 0.35) {Neural Garbage Collection};
  
\draw[cDark, very thick, rounded corners=6pt] (1.05, 1.76) rectangle (5.225, 3.08);
\node[font=\small, text=cDark, anchor=north, inner sep=2pt] at (3.225, 3.22) {KV cache};
 
\draw[arr] (5.25, \yMid) -- (5.55, \yMid);

\node[lmbox, font=\normalsize, text=cDark, align=center, inner sep=2pt]
  (lmev) at (6.00, \yMid) {$\pi_\theta$\\[-4pt] \scriptsize LM};
 
\draw[arr] (6.50, \yMid) -- (6.80, \yMid);
 
\def\bxA{7.05}  \def\bxB{7.45}  \def\bxC{7.85}  \def\bxD{8.25}  \def\bxE{8.65}
\def\barW{0.12}
 
\foreach \yy in {\yLa, \yLb, \yLc} {
  \draw[cGray, line width=0.5pt, opacity=0.5] (6.85, {\yy+0.16}) -- (8.85, {\yy+0.16});
}
 
\newcommand{\barz}[6]{%
  \draw[fill=#5, draw=#6, line width=0.4pt, fill opacity=#4, draw opacity=#4]
    ({#1-\barW/2}, {#2+0.16-#3}) rectangle ({#1+\barW/2}, {#2+0.16});%
}
 
\barz{\bxA}{\yLa}{0.36}{1}  {tC0Fill}{tC0Str}
\barz{\bxA}{\yLb}{0.11}{0.4}{tC0Fill}{tC0Str}
\barz{\bxA}{\yLc}{0.36}{1}  {tC0Fill}{tC0Str}
 
\barz{\bxB}{\yLa}{0.11}{0.4}{tC1Fill}{tC1Str}
\barz{\bxB}{\yLb}{0.36}{1}  {tC1Fill}{tC1Str}
\barz{\bxB}{\yLc}{0.11}{0.4}{tC1Fill}{tC1Str}
 
\barz{\bxC}{\yLa}{0.36}{1}  {tC2Fill}{tC2Str}
\barz{\bxC}{\yLb}{0.36}{1}  {tC2Fill}{tC2Str}
\barz{\bxC}{\yLc}{0.11}{0.4}{tC2Fill}{tC2Str}
 
\barz{\bxD}{\yLa}{0.11}{0.4}{tC3Fill}{tC3Str}
\barz{\bxD}{\yLb}{0.36}{1}  {tC3Fill}{tC3Str}
\barz{\bxD}{\yLc}{0.36}{1}  {tC3Fill}{tC3Str}
 
\barz{\bxE}{\yLa}{0.36}{1}  {tC4Fill}{tC4Str}
\barz{\bxE}{\yLb}{0.11}{0.4}{tC4Fill}{tC4Str}
\barz{\bxE}{\yLc}{0.36}{1}  {tC4Fill}{tC4Str}
 
\node[font=\small\bfseries, text=cDark, anchor=center, align=center, inner sep=2pt]
  at (7.85, 1.15) {(1) Score KV cache};
 
\node[font=\small, text=cDark, anchor=north, inner sep=2pt] at (7.85, 3.22) {per-layer softmax scores};
 
\draw[arrbig, decorate, decoration={snake, amplitude=0.6mm, segment length=2.5mm, post length=2mm}]
  (9.00, \yMid) -- (10.35, \yMid);
 
\node[font=\small\bfseries, text=cDark, anchor=center, align=center, inner sep=2pt]
  at (11.4, 1.15) {(2) Sample cache evictions};
 
\def\pxA{10.60}  \def\pxB{11.00}  \def\pxC{11.40}  \def\pxD{11.80}  \def\pxE{12.20}
 
\node[kvcell={tC0Fill}{tC0Str}]  at (\pxA, \yLa) {};
\node[kvcellDead={}{}]           at (\pxB, \yLa) {};
\node[kvcell={tC2Fill}{tC2Str}]  at (\pxC, \yLa) {};
\node[kvcellDead={}{}]           at (\pxD, \yLa) {};
\node[kvcell={tC4Fill}{tC4Str}]  at (\pxE, \yLa) {};
 
\node[kvcellDead={}{}]           at (\pxA, \yLb) {};
\node[kvcell={tC1Fill}{tC1Str}]  at (\pxB, \yLb) {};
\node[kvcell={tC2Fill}{tC2Str}]  at (\pxC, \yLb) {};
\node[kvcell={tC3Fill}{tC3Str}]  at (\pxD, \yLb) {};
\node[kvcellDead={}{}]           at (\pxE, \yLb) {};
 
\node[kvcell={tC0Fill}{tC0Str}]  at (\pxA, \yLc) {};
\node[kvcellDead={}{}]           at (\pxB, \yLc) {};
\node[kvcellDead={}{}]           at (\pxC, \yLc) {};
\node[kvcell={tC3Fill}{tC3Str}]  at (\pxD, \yLc) {};
\node[kvcell={tC4Fill}{tC4Str}]  at (\pxE, \yLc) {};
 
\draw[cDark, very thick, rounded corners=6pt] (10.42, 1.76) rectangle (12.42, 3.08);
\node[font=\small, text=cDark, anchor=north, inner sep=2pt] at (11.42, 3.22) {pruned cache};

\node[kvcellDead={}{}] at (11.00, 3.9) {};
\node[font=\scriptsize, text=cGray, anchor=west, inner sep=2pt] at (11.15, 3.9) {evicted};
 
\draw[arr] (12.55, \yMid) -- (12.95, \yMid);
 
\node[lmbox, font=\normalsize, text=cDark, align=center, inner sep=2pt]
  (lmgen) at (13.45, \yMid) {$\pi_\theta$\\[-4pt] \scriptsize LM};
 
\def\yGenRow{\yMid}
 
\def\gxA{14.65}  
\def\gxB{15.12}  
\def\gxC{15.65}
\node[gentok, fill=tC5Fill, draw=tC5Str, text=tC5Txt] (gx)  at (\gxA, \yGenRow) {x};
\node[gentok, fill=tC6Fill, draw=tC6Str, text=tC6Txt] (geq) at (\gxB, \yGenRow) {=};
\node[gentok, fill=tC7Fill, draw=tC7Str, text=tC7Txt, minimum width=0.52cm] (g5)  at (\gxC, \yGenRow) {1};
 
\draw[arr] (lmgen.east) -- ([xshift=-2pt]gx.west);
 
\node[font=\small\bfseries, text=cDark, anchor=center, align=center, inner sep=2pt]
  at (15.10, 1.15) {(3) Sample next token };
 
\node[
  minimum width=0.72cm, minimum height=0.72cm,
  font=\LARGE\bfseries, text=cGreen, inner sep=0pt,
] (ck) at (16.5, \yGenRow) {$\checkmark$};

\node[font=\small\bfseries, text=cGreenTxt, anchor=north, align=center, inner sep=2pt]
  (rewlbl) at ([xshift=0pt, yshift=-2pt]ck.south) {Task\\[-2pt]reward};
  
\node[font=\large, text=cBlueStr, inner sep=2pt,
      minimum width=0.90cm, minimum height=0.78cm]
  (grad) at (16.5, 4.70) {$\nabla_\theta \mathcal{L}$};
 
 \draw[reward] (rewlbl.south) -- (grad.north);
 
\draw[update] (grad.west) -- (13.45, 4.70) -- (lmgen.south);
\node[font=\scriptsize, text=cBlueStr, anchor=south, align=center, inner sep=1pt]
  at (14.73, 4.70) {policy gradient\\through sampled tokens};
 
\draw[update] (grad.west) -- (6.00, 4.70) -- (lmev.south);
\node[font=\scriptsize, text=cBlueStr, anchor=south, align=center, inner sep=1pt]
  at (8.3, 4.70) {policy gradient through sampled evictions};
\end{tikzpicture}
\vskip 1.5em

\begin{minipage}[t]{0.5\linewidth}
  \centering
\includegraphics[height=4.4cm]{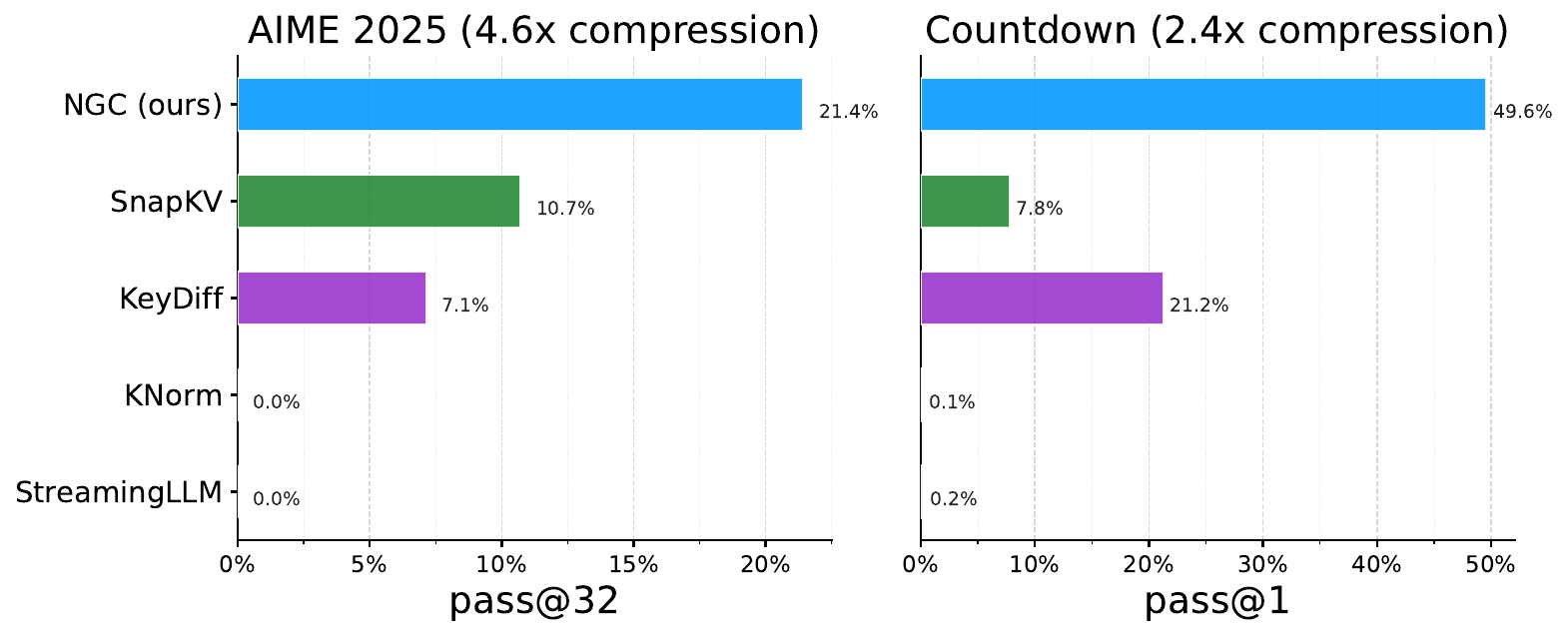}
\end{minipage}%
\begin{minipage}[t]{0.5\linewidth}
  \raggedleft
 \includegraphics[height=4.4cm]{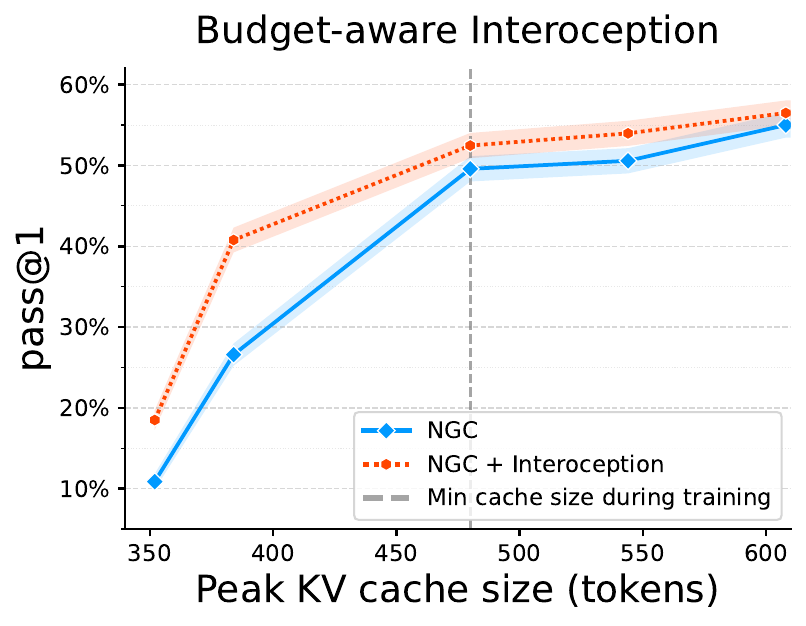}
\end{minipage}

\label{fig:method}
\caption{\textbf{Neural Garbage Collection: Learning to Forget while Learning to Reason.}
\textbf{(top)}
As the language model (LM) generates its chain-of-thought, it periodically enters an {eviction round}: (1) the LM scores KV cache entries via softmax, (2) samples cache evictions using those scores via Gumbel top-$k$, and (3) samples the next token conditioned only on the pruned cache.
Since both tokens and cache evictions are discrete actions sampled from the LM, we can \emph{jointly} train the LM to reason and
manage its memory via \emph{end-to-end reinforcement learning} from
outcome-based task reward alone (e.g., binary correctness) — no proxy
objectives, no supervised finetuning.
\textbf{(bottom left)} NGC substantially outperforms baseline eviction methods at a 50\% eviction rate per round, corresponding to 2.4x reduction in peak KV cache size for Countdown and 4.6x for AIME 2025.
\textbf{(bottom right)} 
To give the LM explicit awareness of its resource constraint before it reasons, we include the eviction rate in the prompt during training — a technique we call \emph{budget-aware interoception}.
This technique improves generalization across cache budgets at test time.
}
\end{figure}

A central lesson of deep learning is that desirable properties emerge from end-to-end optimization pressure rather than being built in manually~\citep{sutton2019bitter}. 
Yet, when it comes to making models more efficient, practitioners rely on deep domain knowledge to design architectures~\citep{dao2024mamba2, ainslie2023gqatraininggeneralizedmultiquery}, systems-level optimizations~\citep{dao2022flashattention, kwon2023vllm}, and inference-time heuristics~\citep{li2024snapkv}. 
We ask whether efficiency itself can be treated as a model capability. 
Under this perspective, the same end-to-end optimization that drives capability can also drive efficiency, enabling an additional dimension of self-improvement~\citep{silver2025experience} in which models become more capable but also more efficient.

A setting where this approach is particularly promising is the chain-of-thought reasoning paradigm~\citep{zelikman2022star} that has driven dramatic advances in model capability~\citep{openai2024learningreason,guo2025deepseekr1}: by thinking longer before answering, models can solve problems that would otherwise be out of reach.
However, these thinking traces rapidly grow the KV cache, creating a bottleneck to further scaling along this dimension.
But paying this cost is not fundamental to reasoning itself — after all, humans reason over long horizons despite severely limited working memory: instead, it is an artifact of how we manage the KV cache. 
Much of the KV cache consists of transient information: intermediate steps that become irrelevant as reasoning progresses or scratch work on sub-problems where only the answer matters downstream.
This suggests that actively managing the KV cache during reasoning need not degrade performance, and may enable models to reason over longer horizons than current resource constraints permit.

Cache management, so far, relies on fixed heuristics or proxy objectives. 
Heuristic eviction policies remove cache entries based on criteria such as attention weights or recency~\citep{xiao2024streamingllm, li2024snapkv, park2025keydiff}.
Compression methods learn compact representations of the model's KV cache using hand-tailored training objectives like reconstruction or distillation losses~\citep{zweiger2026fastkvcompactionattention,monea2025breadcrumbsreasoningmemoryefficientreasoning}, and summarization-based approaches use elaborate prompts that specify what the model should preserve~\citep{vajipey2025simple}.
Relying on fixed rules or proxy objectives for these decisions is anachronistic for reasoning models, whose capabilities emerge from end-to-end optimization against outcome-based task reward.
And since the contents of the KV cache arise from the model's own reasoning process, the model is naturally positioned to manage its own cache. 
If a model can learn to reason, why can't it learn to forget?

To that end, we introduce \textbf{Neural Garbage Collection (NGC)}, in which a language model learns to forget while learning to reason. 
As the model generates a chain-of-thought, it periodically pauses, computes a
softmax over its KV cache entries, samples which to evict, and continues
reasoning conditioned on the pruned cache.
Since this eviction decision is discrete, prior work resorts to differentiable relaxations~\citep{ancucki2025inferencetime} or separate supervised training for those decisions using auxiliary objectives~\citep{liu2025deepseekv3, chen2026msa}. 
Our key insight is that, because we already train reasoning models against
task reward with reinforcement learning — where tokens are discrete actions
sampled from the LM — we can cast cache eviction as \emph{another discrete action}
sampled from the LM and optimize it within the same framework.
A single outcome-based task reward \emph{jointly} trains the model to reason and manage its own memory end-to-end: what the model evicts shapes what it remembers, what it remembers shapes its reasoning, and the correctness of that reasoning trains both the eviction decisions and reasoning tokens that produced it.
Our key methodological contribution is to show that both types of decisions can be optimized from a \emph{single learning signal} — the outcome-based task reward — without auxiliary objectives or separate training stages. 
This follows the \emph{tabula rasa} spirit of AlphaZero~\citep{silver2017masteringchessshogiselfplay}: end-to-end optimization pressure alone guides how the model reasons and what it forgets.

We evaluate NGC on  Countdown~\citep{gandhi2024stream,gandhi2025cognitive}, and on math tasks AMC, and AIME.
On Countdown, NGC more than doubles the accuracy of the next-best baseline (49.6\% vs 21.2\%) at a 2.4x reduction in peak cache size.
We then validate NGC at larger scale by training on \text{DAPO-17k}~\citep{yu2025dapo}.
On mathematical reasoning tasks (AMC, AIME), we show that NGC maintains strong performance at 2-3x reductions in peak KV cache size and outperforms all baseline cache management methods.

\section{Related work}
\paragraph{Language Model Self-Improvement.}
A growing body of work explores language models that self-improve.
STaR \citep{zelikman2022star} demonstrated that LMs can bootstrap reasoning ability by fine-tuning on self-generated chains-of-thought filtered by correctness.
Subsequent work extended this paradigm to more general latent reasoning and iterative self-improvement, including Quiet-STaR \citep{zelikman2024quietstar}, ReST \citep{gulcehre2023rest}, and Self-Rewarding Language Models \citep{yuan2024selfrewarding}.
The reinforcement learning from verifiable rewards (RLVR) paradigm \citep{guo2025deepseekr1, openai2024learningreason} showed that self-improving reasoning scales: models trained on outcome supervision can acquire sophisticated reasoning capabilities.
NGC is inspired by this line of work, but targets a different dimension of self-improvement: rather than improving task performance, the model learns to \emph{improve its own efficiency}. 
This reframes efficiency as a capability: models typically become more capable at the cost of being slower and more expensive to run, whereas a model that improves its efficiency, in principle, becomes \emph{cheaper} as it improves. 
Crucially, many forms of resource management (\eg,eviction, routing) naturally take the form of discrete decisions, making them directly amenable to the same reinforcement learning with verifiable rewards (RLVR) framework used to train modern reasoning models.

\paragraph{Resource Rationality, Bounded Rationality, Metacognition.}
A long tradition in cognitive science and AI studies intelligent behavior under computational constraints, viewing agents as trading off task performance against the cost of computation \citep{15ae9893-24f8-3286-b772-e96b33422e95,horvitz1989reasoning,horvitz1990computation,russell1991do}.
Resource rationality \citep{lieder2020resource} formalizes this perspective by treating cognition as the optimization of task performance subject to resource costs.
This work has primarily used resource rationality as a normative framework to explain behavior, rather than as a training objective to produce it in language models.
A related tradition studies \emph{metacognition}---the ability of an agent to monitor and regulate its own cognitive processes. 
Recent work explores metacognitive prompting and \emph{metacognitive reuse}, where models reflect on prior reasoning traces to extract reusable strategies \citep{didolkar2025metacognitivereuseturningrecurring}. 
NGC teaches a model to regulate its own \emph{current} cognitive state by deciding what information to retain under a memory budget. This makes NGC a form of online, prospective metacognition: the model learns not just to reason, but to manage the memory on which its reasoning depends.
\paragraph{Sparse attention, alternative architectures, and conditional computation.}
Several lines of work improve efficiency by changing how transformers use compute. 
Sparse attention methods reduce the cost of attending over long contexts by selecting only a subset of keys per query~\citep{yuan2025native, liu2025deepseekv3, chen2026msa}. 
These methods use attention scores to determine which entries are \emph{read} at a given step, but they do not reduce the size of the cache itself; memory still grows without bound. 
NGC uses a related scoring mechanism but with fundamentally different semantics: its decisions determine which entries are removed indefinitely, permanently reshaping the context available to future tokens, and providing a practical mechanism for keeping the KV cache constant size.
A related line of work uses \emph{conditional computation} to improve efficiency by routing tokens to only a subset of model parameters, as in mixture-of-experts (MoE) architectures~\citep{shazeer2017outrageously, fedus2022switch, liu2025deepseekv3}. 
These models show that routing decisions can themselves be learned, but they target a different resource and training regime: MoEs sparsify \emph{parameter compute} through architectural routing whereas NGC sparsifies \emph{memory} during inference by deciding what information to retain under an explicit budget.
State-space models~\citep{gu2023mamba, dao2024mamba2} achieve efficiency through different computational primitives, but require pre-training from scratch and cannot fully leverage the thriving hardware and software ecosystem specific to transformers. 
By contrast, NGC requires no modification to the base transformer architecture, repurposing the existing parameters to perform eviction, and can be applied directly during post-training. 

Most crucially, what distinguishes NGC from prior approaches is
\emph{how} its discrete decisions are trained.
In methods such as DeepSeek Sparse Attention~\citep{deepseekai2024deepseekv32},
the indexer, which selects which tokens the model attends to, and the main model are not trained under a unified objective.
Instead, the indexer is first warm-started to imitate dense attention via a
KL-divergence loss against the attention scores, and its input is then explicitly detached from the computational graph throughout sparse
training: the indexer continues to receive signal only from the KL-divergence term, while the
main model is optimized only by the language modeling loss~\citep{deepseekai2024deepseekv32}.
The indexer is a separate module trained by the designer to recognize importance according to a proxy criterion, then installed into the
system while the main model optimizes around it.
NGC instead trains eviction under a unified objective: the same reward signal
that shapes reasoning also shapes how it remembers. 
Moreover, NGC does not require a separate warm-up stage.
Likewise, conditional-computation methods such as mixture-of-experts learn routing decisions, but their routers are typically trained with standard backpropagation-based objectives and load-balancing schemes rather than from downstream task reward \citep{fedus2022switch}.
In contrast, NGC treats discrete choices that modulate memory use as a first-class action and optimizes them with reinforcement learning.

\paragraph{KV cache compression.}
KV cache compression methods reduce memory by evicting or compressing cache entries during generation.
Static methods apply fixed eviction rules: retaining only the most 
recent tokens \citep{xiao2024streamingllm} or tokens with high attention scores 
\citep{li2024snapkv}.
More recent methods use richer proxies for importance such as diversity \citep{park2025keydiff} or optimization objectives to compress the cache~\citep{ eyuboglu2026cartridges, mu2024gist, zweiger2026fastkvcompactionattention}.
Breadcrumbs~\citep{monea2025breadcrumbsreasoningmemoryefficientreasoning} and Memento~\citep{5memento} compress the KV cache during reasoning, but via a fundamentally different mechanism. Breadcrumbs trains a compressed student policy to match an uncompressed teacher via token-level KL divergence, distilling from the teacher's rollouts.
Memento teaches models to produce compressed summaries via SFT on reasoning traces interleaved with summaries generated by a strong teacher prompted with hand-designed rubrics. 
Both methods inject inductive bias about what good compression looks like—Breadcrumbs via the frozen teacher's reasoning style, Memento via the prompt author's rubrics—rather than letting task reward determine what to remember. 
NGC takes a conceptually different approach and jointly learns how to manage memory and reason, end-to-end from outcome-based task reward alone.  
This requires no annotation pipeline, no teacher model, no supervised data, and no hand-designed objectives.

\section{Problem setting: resource use as a learned capability}
\label{sec:problem}

Today, practitioners manage resource constraints through design choices that span 
the entire modeling stack: architecture, systems-level optimizations, and even scaling laws that account for inference time cost~\citep{hoffmann2022training,sardana2025beyond}. 
We ask whether efficiency can instead be \emph{learned}---treating it as a capability 
the model acquires through end-to-end training.

We model resource-constrained generation as a sequential decision process of horizon $T$, 
in which the model makes two kinds of decisions at each step: a task action $a_t$ 
(\eg a generated token) and a resource-management action $u_t$ (\eg, which KV cache 
entries to evict). Let $z_t \in \mathcal{Z}$ denote the model's internal computational 
state at step $t$, and let
\[
    (a_t, u_t) \sim \pi_{\theta,\phi}(\cdot \mid z_t),
\]
where $\theta$ governs task behavior and $\phi$ governs resource allocation; these parameters can be shared or tied but this is a design choice independent of the general formulation.

A trajectory is
\[
    \tau = (z_t, a_t, u_t)_{t=1}^T.
\]
We optimize expected task reward subject to a resource budget $B$:
\begin{equation}
    \max_{\theta,\phi}\; \mathbb{E}_{\tau \sim \pi_{\theta,\phi}}[\mathit{R}(\tau)]
    \quad \text{subject to} \quad
    \mathbb{E}_{\tau \sim \pi_{\theta,\phi}}[C(z_{1:T})] \le B,
    \label{eq:optimization_objective}
\end{equation}
where $R$ is a task reward (\eg, answer correctness) and 
$C : \mathcal{Z}^T \to \mathbb{R}_{\ge 0}$ measures resource consumption 
(\eg, the mean fraction of KV cache entries retained over the trajectory).
The resource-management decision $u_t$ is typically discrete, which makes it naturally 
amenable to the reinforcement-learning paradigm already used to train reasoning models.
We instantiate this framework for KV-cache management in the next section.

\section{Neural Garbage Collection}
\label{sec:method}
In this section, we describe NGC, our method for training a language model to jointly reason and manage its KV cache via end-to-end reinforcement learning using purely an outcome-based task reward. 
The main deviation from standard RLVR settings is that our action space consists of both eviction decisions and tokens. 
Our key methodological contribution is to show that both types of decisions can be optimized from a single learning signal — the outcome-based task reward — without auxiliary objectives or separate training stages.

\subsection{State space and environment dynamics: grow-then-evict cycle}
In this section, we define the \emph{state space} and \emph{environment dynamics} of our reinforcement learning formulation. 
In language modeling, the state corresponds to the model's context window—represented by its KV cache—while the environment dynamics dictate how this context evolves over time. 
In standard RLVR, environment transitions are strictly monotonic: the LM samples the next token conditioned on all previously generated tokens, steadily growing the KV cache. In NGC, the environment dynamics fundamentally differ because the model actively modifies its own state space by managing its own KV cache, in addition to generating tokens.

Specifically, we introduce a ``grow-then-evict'' dynamic. Periodically, at a set of discrete eviction steps, the LM selects cache entries to keep. Every $\delta$ tokens, an \emph{eviction round} fires: the model produces scores for all cache entries and samples a $(1-\varepsilon)$ fraction of them to keep, where $\varepsilon \in (0,1]$ is the \emph{eviction rate}, and permanently evicts the rest; we will refer to $\delta$ as the \emph{eviction cadence}. Both prefill and generation tokens are eligible for eviction and we evict the same fraction (though not the same entries) in each of the $L$ layers of the transformer. The first eviction round fires when the total number of tokens in the cache including prefill tokens reaches $\delta$. Between eviction rounds, the cache grows as usual as the model generates additional tokens conditioned on the surviving entries. Under these dynamics, the maximum cache size before an eviction round converges to the steady state value $L\frac{\delta}{\varepsilon}$; this is independent of the number of reasoning tokens generated. We formalize this in Section~\ref{appendix:proof} of the Appendix.

\subsection{Extending the action space: block-level evictions via existing attention mechanism}
Having formalized the state space, we now describe how we extend the LM's \emph{action space} to include cache evictions. 
At each eviction round, the action is a subset of cache entries to keep. 
Language models do not natively support this type of action since their action space consists only of tokens. 
In the rest of this section, we describe how we use the transformer's existing attention mechanism to parameterize cache evictions, enabling the model to act on its own KV cache — even though it was never trained, through pre-training or supervised fine-tuning (SFT), to do so.

\paragraph{Parameterizing eviction actions}
To perform KV cache evictions, the LM must be able to evaluate the utility of its cache entries—a task it was never explicitly trained to perform. 
Fortunately, the transformer's native attention mechanism is already well-suited for this role. 
First, after standard pre-training, attention weights naturally encode a prior over which past entries are relevant for ongoing computation. 
This provides an effective initialization, which has been shown to be essential for stable reinforcement learning \citep{gandhi2025cognitive}. 
Second, re-purposing this existing machinery introduces no additional architectural overhead. 
We therefore parameterize eviction decisions directly with the model's attention scores, setting $\phi = \theta$ and introducing no new parameters.

Concretely, at each eviction step, we score the remaining (prefix) keys according to how much they are attended to by the $w$ most recent queries.
For layer $\ell$, we use the queries $\mathbf{Q}^{(\ell)}$ associated with the $w$ most recent tokens, compute attention weights from these queries to the prefix keys,
\[
    \mathbf{A}
    = \mathrm{softmax}\!\left(
        \frac{\mathbf{Q}^{(\ell)} {\mathbf{K}^{(\ell)}_{\mathrm{prefix}}}^\top}{\sqrt{d_h}}
      \right),
\]
and average across heads and recent queries to obtain a scalar importance score for each prefix key:
\[
    \psi_t = \frac{1}{H \cdot w} \sum_{h=1}^{H} \sum_{q=1}^{w} A_{q,t}^{(\ell,h)}.
\]

In our experiments, we use $w=5$, so scoring incurs only a small, constant memory overhead.
This construction is inspired by sparse attention methods~\citep{liu2025deepseekv3}, which also use attention scores to decide which keys to read when decoding a token. 
NGC uses the same primitive but with fundamentally different semantics:
rather than deciding which entries are read {at the current step}, these scores decide which entries are \emph{permanently removed} from memory.
Moreover, instead of requiring a separate quantized KV cache, used to decide which actual KV cache entries get used for a decode step, and additional scoring parameters, we re-purpose the attention mechanism into a parameter-free eviction policy.

\paragraph{Coarsening the action space via block-level evictions}
Evicting at the individual key level creates a \emph{credit assignment} problem: the marginal contribution of any single key is difficult to isolate. 
Moreover, the importance of adjacent keys is highly correlated: semantically coherent units such as sub-computations typically occupy contiguous spans of tokens in the chain-of-thought.
Consider a set of surviving keys of length $T$.
We \emph{coarsen} the action space by working at the block level and partitioning the $T$ keys into $N = \lceil T / b \rceil$ blocks of contiguous tokens of size $b$, where the final block may be smaller if $b \nmid T$ ($b$ does not divide $T$).
Let $m_t$ be a mask indicating whether a key $t$ is valid (e.g., non-padding); we omit the layer index for simplicity. 
We aggregate the per-key scores $\boldsymbol{\psi} \in \mathbb{R}^T$ to block-level scores by averaging over valid keys in a block:\[
    s_j = \frac{\sum_{t \in \mathcal{B}_j} m_t \, \psi_t}{\sum_{t \in \mathcal{B}_j} m_t}
\]
yielding $\mathbf{s} \in \mathbb{R}^N$, one score per block.
This dramatically reduces the size of the action space.

While the focus of our work is on the algorithmic properties of NGC rather than hardware optimizations, blockwise selection broadly aligns with hardware principles: GPUs have higher throughput for  contiguous memory accesses, and block-level~\citep{yuan2025native} eviction maps naturally onto paged KV cache systems such as vLLM~\citep{kwon2023vllm}, where the cache is partitioned into fixed-size pages of contiguous keys.

\subsection{Efficiently sampling eviction actions with Gumbel-top-k}
We now define a procedure for tractably \emph{sampling eviction actions} for training. 
KV cache eviction is conventionally performed using deterministic top-k rules~\citep{li2024snapkv,xiao2024streamingllm}.
We instead formulate cache eviction as a \emph{stochastic action} sampled from the LM. 
This formulation enables us to leverage policy gradient methods for unbiased gradient estimates: stochastic sampling with exact log-probabilities is precisely the structure these methods require.

Concretely, we need to be able to sample a size-$K$ subset of kept cache entries in a way that exposes a well-defined log-probability of the selected set.
Initially, this seems challenging since subset selection is combinatorial and the partition function is intractable. 
We formulate this task as sequential sampling without replacement using the \emph{Gumbel-top-$k$ trick} \citep{kool2019stochastic, vieira2014gumbel}. 
Conveniently, sampling can be done in parallel: we sample a size-$K$ subset by adding i.i.d.\ Gumbel noise to the block logits and keep the $K$ largest perturbed values.
The log-probability of the selected block sequence $\boldsymbol{\sigma} = (\sigma_1, \dots, \sigma_K)$ admits a closed form
\begin{equation}
    \log p(\boldsymbol{\sigma} \mid \mathbf{s}) 
    = \sum_{j=1}^{K} \left[s_{\sigma_j} 
    - \log \sum_{t \,\notin\, \{\sigma_1, \dots, \sigma_{j-1}\}} 
    \exp(s_t)\right]
\label{eq:gumbel}
\end{equation}
where indices range over blocks.

Each term naively requires recomputing the partition function over the remaining blocks, \ie, computing a \texttt{logsumexp} $K$ times.
We avoid this via a prefix sum trick that tracks the cumulative fraction of removed probability mass.

\subsection{Policy optimization: token and eviction policy gradients}
\label{sec:per-round-inputs}
\begin{figure}[ht]
\centering
\begin{tikzpicture}[
    every node/.style={font=\sffamily},
    cell/.style={minimum size=0.52cm, inner sep=0pt, outer sep=0pt, draw=white, line width=0.3pt},
    tok/.style={minimum size=0.625cm, draw=border, thick, rounded corners=2pt,
                inner sep=0.0pt, font=\scriptsize\sffamily\bfseries, fill=white},
    evmark/.style={font=\tiny\sffamily\bfseries, text=white},
]

\def\cs{0.50}  

\node[font=\large\sffamily\bfseries, text=labelc] at (2.6, 8.5)
    {Reasoning with dynamic KV cache management};

\foreach \i in {0,...,9} {
    \node[tok] (t\i) at (\i*0.85, 7.4) {$t_{\i}$};
}

\draw[evictA, ultra thick, dashed] ([xshift=2pt]t4.east) ++(0.015,0.45) -- ++(0.015,-1.3);
\node[font=\small\sffamily\bfseries, text=evictA, anchor=north]
    at ([xshift=2pt, yshift=-1.0cm]t4.east) {Round 1};

\draw[decorate, decoration={brace, amplitude=4pt, mirror}, thick, border]
    ([yshift=-3pt]t5.south west) -- ([yshift=-3pt]t7.south east)
    node[midway, below=5pt, font=\small\sffamily, text=labelc] {w/o $t_1,t_3$};

\draw[evictB, ultra thick, dashed] ([xshift=2pt]t7.east) ++(0.02,0.45) -- ++(0.02,-1.3);
\node[font=\small\sffamily\bfseries, text=evictB, anchor=north]
    at ([xshift=2pt, yshift=-1.0cm]t7.east) {Round 2};

\draw[decorate, decoration={brace, amplitude=4pt, mirror}, thick, border]
    ([yshift=-3pt]t8.south west) -- ([yshift=-3pt]t9.south east)
    node[midway, below=5pt, font=\small\sffamily, text=labelc] {w/o $t_2,t_5$};

\foreach \i in {1,3} {
    \node[evmark, text=evictA] at (t\i) {$\boldsymbol\times$};
    \node[tok, draw=evictA, fill=evictA!15] at (t\i) {};
    \node[font=\scriptsize\sffamily\bfseries, text=evictA!80!black] at (t\i) {$t_{\i}$};
}
\foreach \i in {2,5} {
    \node[tok, draw=evictB, fill=evictB!15] at (t\i) {};
    \node[font=\scriptsize\sffamily\bfseries, text=evictB!80!black] at (t\i) {$t_{\i}$};
}

\draw[-{Stealth[length=5pt]}, thick, border!60]
    (3.24, 5.9) -- (3.24, 5.25)
    node[midway, right=3pt, font=\small\sffamily, text=labelc] {encode into attention mask};

\node[font=\large\sffamily\bfseries, text=labelc] at (2.6, 4.85)
    {Replay Attention Mask};

\def\ox{0.25}
\def\oy{3.75}

\node[font=\small\sffamily, text=labelc, rotate=90] at (\ox - 0.55, \oy - 5*\cs)
    {query token $i$};
\node[font=\small\sffamily, text=labelc] at (\ox + 5*\cs, \oy + 0.5)
    {key token $j$};

\foreach \j in {0,...,9} {
    \node[font=\tiny\sffamily, text=labelc] at (\ox + \j*\cs + 0.5*\cs, \oy + 0.15)
        {$\j$};
}
\foreach \i in {0,...,9} {
    \node[font=\tiny\sffamily, text=labelc] at (\ox - 0.15, \oy - \i*\cs - 0.5*\cs)
        {$\i$};
}

\newcommand{\cellat}[3]{
    \fill[#3] (\ox + #2*\cs, \oy - #1*\cs) rectangle +(\cs, -\cs);
    \draw[white, line width=0.3pt] (\ox + #2*\cs, \oy - #1*\cs) rectangle +(\cs, -\cs);
}

\foreach \i in {0,...,9} {
    \foreach \j in {0,...,9} {
        \pgfmathparse{int(\j > \i)}
        \ifnum\pgfmathresult=1
            \cellat{\i}{\j}{fut}
        \fi
    }
}

\cellat{0}{0}{vis!60}
\foreach \j in {0,1} { \cellat{1}{\j}{vis!60} }
\foreach \j in {0,1,2} { \cellat{2}{\j}{vis!60} }
\foreach \j in {0,1,2,3} { \cellat{3}{\j}{vis!60} }
\foreach \j in {0,1,2,3,4} { \cellat{4}{\j}{vis!60} }
\foreach \j in {0,2,4,5} { \cellat{5}{\j}{vis!60} }
\foreach \j in {0,2,4,5,6} { \cellat{6}{\j}{vis!60} }
\foreach \j in {0,2,4,5,6,7} { \cellat{7}{\j}{vis!60} }
\foreach \j in {0,4,6,7,8} { \cellat{8}{\j}{vis!60} }
\foreach \j in {0,4,6,7,8,9} { \cellat{9}{\j}{vis!60} }

\foreach \i in {5,6,7} {
    \cellat{\i}{1}{evictA!40}
    \cellat{\i}{3}{evictA!40}
}
\foreach \i in {8,9} {
    \cellat{\i}{1}{evictA!40}
    \cellat{\i}{3}{evictA!40}
}
\foreach \i in {8,9} {
    \cellat{\i}{2}{evictB!40}
    \cellat{\i}{5}{evictB!40}
}

\foreach \i in {5,6,7,8,9} {
    \node[font=\tiny, text=evictA!80!black] at (\ox + 1*\cs + 0.5*\cs, \oy - \i*\cs - 0.5*\cs) {$\times$};
    \node[font=\tiny, text=evictA!80!black] at (\ox + 3*\cs + 0.5*\cs, \oy - \i*\cs - 0.5*\cs) {$\times$};
}
\foreach \i in {8,9} {
    \node[font=\tiny, text=evictB!80!black] at (\ox + 2*\cs + 0.5*\cs, \oy - \i*\cs - 0.5*\cs) {$\times$};
    \node[font=\tiny, text=evictB!80!black] at (\ox + 5*\cs + 0.5*\cs, \oy - \i*\cs - 0.5*\cs) {$\times$};
}

\draw[border, thick] (\ox, \oy) rectangle (\ox + 10*\cs, \oy - 10*\cs);

\def\lx{5.5}
\def\ly{2.8}

\fill[vis!60] (\lx, \ly) rectangle ++(0.35, 0.35);
\draw[white, line width=0.3pt] (\lx, \ly) rectangle ++(0.35, 0.35);
\node[font=\sffamily, text=labelc, anchor=west] at (\lx + 0.45, \ly + 0.175) {retained (visible)};

\fill[evictA!40] (\lx, \ly - 0.5) rectangle ++(0.35, 0.35);
\draw[white, line width=0.3pt] (\lx, \ly - 0.5) rectangle ++(0.35, 0.35);
\node[font=\tiny, text=evictA!80!black] at (\lx + 0.175, \ly - 0.325) {$\times$};
\node[font=\sffamily, text=labelc, anchor=west] at (\lx + 0.45, \ly - 0.325) {evicted (round 1)};

\fill[evictB!40] (\lx, \ly - 1.0) rectangle ++(0.35, 0.35);
\draw[white, line width=0.3pt] (\lx, \ly - 1.0) rectangle ++(0.35, 0.35);
\node[font=\tiny, text=evictB!80!black] at (\lx + 0.175, \ly - 0.825) {$\times$};
\node[font=\sffamily, text=labelc, anchor=west] at (\lx + 0.45, \ly - 0.825) {evicted (round 2)};

\fill[fut] (\lx, \ly - 1.5) rectangle ++(0.35, 0.35);
\draw[white, line width=0.3pt] (\lx, \ly - 1.5) rectangle ++(0.35, 0.35);
\node[font=\sffamily, text=labelc, anchor=west] at (\lx + 0.45, \ly - 1.325) {future};

\end{tikzpicture}
\vspace{1em}
\caption{
\textbf{Replay masks enable efficient end-to-end RL training.}
During reasoning, the language model 
samples eviction decisions that dynamically change its KV cache.
We can perform the policy gradient update efficiently and correctly using \emph{replay attention masks.}
The eviction decisions can be captured by attention masks that reproduce the visibility patterns over previous tokens induced by the eviction decisions.
We then use these masks to compute log-probabilities in parallel using a single forward pass over all the tokens.
\textbf{(top)}~The model evicts cache entries at fixed intervals.
({\color{evictA}\textbf{round\,1}}: evicts $t_1, t_3$;\;
 {\color{evictB}\textbf{round\,2}}: evicts $t_2, t_5$).
For illustrative purposes, we show the schematic for a single layer, but in practice we evict separately across layers.
\textbf{(bottom)}~The resulting replay mask: 
row $t$ marks which keys were visible when token $t$ was generated.  A single forward pass with this mask reproduces the 
next-token distributions from generation in parallel.
Without the replay mask, token log-probabilities are computed under a richer context than the model actually saw during generation, introducing a systematic off-policyness that causes training collapse.
}
\label{replay-mask}
\end{figure}

With the states, actions, and sampling procedure established, we now formalize the \emph{policy optimization objective}. In standard RLVR, the policy gradient applies solely to the token space. 
Here, we define a unified objective over a \emph{joint} action space consisting of both tokens and discrete cache evictions. 
Concretely, we detail how the single, outcome-based task reward in RLVR can be used to update the model to jointly optimize both its chain-of-thought reasoning and how it manages it own KV cache.

Given a prompt $x$, we sample $G$ trajectories $\{\tau_i\}_{i=1}^G$ from $\pi_\theta$, where each trajectory $\tau_i = \{(o_{i,t}, \boldsymbol{\sigma}_{i,t})\}_{t=1}^{|\tau_i|}$ interleaves token generation $o_{i,t}$ with eviction decisions $\boldsymbol{\sigma}_{i,t}$, where $\boldsymbol{\sigma}_{i,t} = \emptyset$ at timesteps $t$ that do not correspond to an eviction round. 
Each trajectory is scored with a binary task reward $r_i \in \{0,1\}$ computed from the generated tokens $\{o_{i,t}\}$.

Following GRPO~\citep{guo2025deepseekr1}, we compute group-normalized advantages:
\[
\hat{A_i} = r_i - \frac{1}{G}\sum_{j=1}^G r_j.
\]
The only training signal is task accuracy.

We optimize $\pi_{\theta}$ using Dr. GRPO~\citep{liu2025understandingr1zeroliketraining}. 
For the token-level actions, we have a token-level policy gradient:
\begin{equation}
\mathcal{L}_{\text{token}} = -\mathbb{E}\left[\frac{1}{G} \sum_{g=1}^G \sum_{t=1}^{|o_i|} \log \pi_{\theta}(o_{i,t} \mid \mathcal{C}_{i, t})\, \hat{A_i}\right].
\label{eq:token}    
\end{equation}
Here, $\mathcal{C}_{i,t}$ denotes the context available at step $t$ of trajectory $i$ (\ie, the surviving KV cache entries).

For the eviction decisions, we have an eviction decision level policy gradient
\begin{equation}
\mathcal{L}_{\text{mem}} = \sum_{\ell=1}^{L} \mathcal{L}^{\ell}_{\text{mem}}, \qquad \mathcal{L}^{\ell}_{\text{mem}} = -\mathbb{E}\Bigg[\frac{1}{G}\sum_{i=1}^{G} \frac{1}{|\mathcal{T}^\ell_{i}|} \sum_{t \in \mathcal{T}^\ell_{i}} \log \pi_{\theta}(\boldsymbol{\sigma}^\ell_{i,t} \mid \mathcal{H}^\ell_{i,t})\, \hat{A_i} \Bigg]
\label{eq:mem_eq}
\end{equation}
where $L$ is the number of transformer layers, $\mathcal{T}^\ell_i$ is the set of eviction rounds at layer $\ell$ for rollout $i$ that fire before sequence termination, $\boldsymbol{\sigma}^\ell_{i,t}$ is the subset of blocks retained at round $t$, and $\mathcal{H}^\ell_{i,t}$ is the  context used to make eviction decisions at the $\ell$-th layer (layer-$\ell$ queries and alive keys at round $t$); with slight abuse of notation, we write $\pi_\theta(\boldsymbol{\sigma}^\ell_{i,t} \mid \mathcal{H}^\ell_{i,t})$ to denote the probability of the retained subset given by Gumbel top-k in Equation~\ref{eq:gumbel}. 
The inner mean averages over eviction rounds within an individual rollout.
We use the layer index to emphasize that we make separate eviction decisions for each KV entry in each layer rather than making the same decision for all layers for a given token.

Crucially $\hat{A_i}$ is the \emph{same} for both $\mathcal{L}_{\text{token}}$ and $\mathcal{L}_{\text{mem}}$: both learning to generate reasoning tokens and learning to evict come from the same learning signal. 
The total objective is:
\begin{equation}
\mathcal{L} = \mathcal{L}_{\text{token}} +  \mathcal{L}_{\text{mem}}.
\label{eq:loss}    
\end{equation}
We again emphasize that although we call $\mathcal{L}$ a loss, it  is not an auxiliary loss (e.g., an $\ell_1$ sparsity penalty or a KL divergence between teacher and student attention weights~\citep{deepseekai2024deepseekv32}): it introduces no independent training signal beyond the task reward already present in $\hat{A_i}$. It is a scalar whose gradient is the correct policy gradient estimate for both token and eviction decisions. 
The resource constraint in our setting enters by construction (evicting a fixed amount of cache each time) rather than through auxiliary losses.

Because eviction decisions are scored by the model's own queries and keys ($\phi = \theta$), the policy gradient from $\mathcal{L}_{\text{mem}}$ flows into every parameter that shapes $\mathbf{Q}$ and $\mathbf{K}$---which is every parameter the LM uses to reason. Token generation and cache management are therefore treated as a single policy optimized under a single outcome reward.

\subsection{Computing rollout log-probabilities efficiently with replay masks}
\label{sec:replay}
To compute the policy gradients defined above, we need to evaluate the log-probabilities of the exact trajectories sampled during rollouts. 
This section introduces \emph{replay attention masks}, a mechanism to efficiently compute these probabilities.

The key challenge is that because NGC dynamically modifies its own context through cache eviction, it breaks an assumption in standard RLVR: that the model attends to all previous tokens when generating a new token.
Concretely, when the model generates a new token, it does not attend to cache entries it has already evicted.
Therefore, naively computing log-probabilities under a standard causal attention mask would be incorrect and create a form of off-policyness~\citep{zheng2025gspo, liu-li-2025-rl-collapse}.
We use {replay attention masks} to correctly and efficiently compute Equation~\ref{eq:token} (\ie, the log-probability of the generated tokens) during training.
Specifically, we ``replay'' the
rollout log-probabilities under a per-layer attention mask that replicates the exact visibility pattern over past tokens that was induced by the eviction decisions (Figure~\ref{replay-mask}). 
A single forward pass over those tokens with these replay masks gives us the correct log-probabilities.

Another subtle technical challenge is ensuring that gradients flow from $\mathcal{L}_{\text{mem}}$; see Figure~\ref{fig:autograd} for a schematic of the desired autograd graph.  
Eviction decisions at a given round are scored using the queries and keys at that eviction round.  
If those tensors were
simply cached during rollouts and passed directly into
Equation~\ref{eq:mem_eq}, they would be numerically correct but
\emph{detached from the computation graph}, and no gradient
from $\mathcal{L}_\text{mem}$ would reach $\theta$.  
We recompute these inputs in the replay forward pass, so that the values are on the autograd graph.
Using these inputs, we then re-compute the eviction log-probabilities via Equation~\ref{eq:mem_eq},
analogously to how we recompute per-token log-probabilities for
$\mathcal{L}_\text{token}$.
The resulting eviction log-probabilities are therefore live on the
autograd graph, and gradients flow from $\mathcal{L}_\text{mem}$
through the eviction decisions into $\theta$.

Crucially, this replay step introduces no additional overhead beyond standard RL training,
which already requires a forward pass over sampled trajectories.
We only have to store a negligible amount of additional state (a binary mask per layer and per eviction round) to construct the replay masks.

\subsection{Eviction Rate Curriculum}
Finally, we introduce a curriculum to ensure that our model can be trained stably to both optimize its own reasoning and cache management. 
Intuitively, evicting large amounts of the KV cache from the start of training can be destabilizing to the model. 
During training, we smooth the transition to more aggressive eviction rates by slowly increasing the eviction rate using a staircase curriculum over the retention rate $p_0$ (\ie, percentage of kept entries).
More formally, let $\{\rho_0, \rho_1, \ldots, \rho_K\}$ be a sequence of retention rates with $\rho_0 \geq \rho_1 \geq \cdots \geq \rho_K$, and let $\Delta$ denote the number of training steps per stage.
At training step $t$, the current stage index is $\ell = \min(\lfloor t / \Delta \rfloor,\, K)$.
Within stage $\ell < K$, we apply a linear blend toward the next level during the final fraction $\alpha \in (0,1)$ of the stage.
We set $\alpha=0.6$ in our experiments.
Let $s = (t \bmod \Delta) / \Delta$ denote the fractional progress within the stage. Then:
\begin{equation}
    p_0(t) = \begin{cases}
        \rho_\ell & \text{if } s < 1 - \alpha, \\
        \rho_\ell + \dfrac{s - (1-\alpha)}{\alpha}\,(\rho_{\ell+1} - \rho_\ell) & \text{if } s \geq 1 - \alpha.
    \end{cases}
\end{equation}
At the final stage $\ell = K$, we set $p_0(t) = \rho_K$.
See the rightmost panel of Figure~\ref{fig:ngc-training-dynamics} for an illustration of this schedule.

\section{Experiments}
We begin with a controlled study on Countdown, validating and ablating core design choices, and then evaluate on standard math competition benchmarks.

\subsection{Shared experimental setup}
\paragraph{Model and training recipe.}
We train DeepSeek-R1-Distill-Qwen-1.5B~\citep{guo2025deepseekr1} using on-policy Dr. GRPO with a staircase eviction curriculum. We use block size $b=32$ and window size $w=5$.
Training hyperparameters are standard; see Section~\ref{sec:hypers} in the Appendix for details.

\paragraph{Baselines.}
We compare NGC against four inference-time eviction heuristics applied to a model trained with the full KV cache: \textbf{SnapKV}~\citep{li2024snapkv} (attention weights), \textbf{KeyDiff}~\citep{park2025keydiff} (key diversity), \textbf{KNorm}~\citep{devoto2024knorm} (key norm statistics), and \textbf{StreamingLLM}~\citep{xiao2024streamingllm} (sliding window with attention sinks). 

\paragraph{Metrics.}
We report accuracy on held-out test problems. 
At evaluation time, we replace Gumbel-top-$k$ sampling with greedy top-$k$ selection, retaining the $K$ highest-scored blocks deterministically.

To measure memory savings, we report the \textbf{average peak KV cache reduction}: the factor by which a method reduces the \textbf{peak KV cache size} relative to no eviction. Concretely, let $p_i$ denote the prompt length, $c_i^{\text{base}}$ the completion length under the no-eviction baseline, and $c_i^{\text{method}}$ the completion length under the method being evaluated. Let $\text{peak}(p, c, \epsilon, \delta)$ denote the maximum number of KV entries held in memory at any point during generation for a sequence with prompt length $p$, completion length $c$, eviction rate $\epsilon$, and cadence $\delta$. The average peak KV cache reduction is then
\begin{equation*}
    \mathbb{E}\!\left[\frac{\text{peak}(p_i,\, c_i^{\text{base}},\, 0,\, \delta)}{\text{peak}(p_i,\, c_i^{\text{method}},\, \epsilon,\, \delta)}\right],
\end{equation*}
where the expectation is over prompts. A value of 2x means the method uses half the peak memory of the no-eviction baseline.

\subsection{Controlled analysis on Countdown}
\paragraph{Setup.}
We train {DeepSeek-R1-Distill-Qwen-1.5B}~\citep{guo2025deepseekr1} for 250
steps on Countdown~\citep{gandhi2024stream}, a combinatorial arithmetic task requiring
a model to combine randomly drawn numbers via basic operations to reach a target number.
We use a maximum completion length of 1024 tokens, eviction cadence 256, 32 unique prompts
per update step, and 16 rollouts per prompt. Our training set contains 327,680
problems (3 and 4 input numbers); our test set contains 1024 held-out problems.
Countdown is a demanding testbed since {DeepSeek-R1-Distill-Qwen-1.5B}
achieves near-zero accuracy before training; NGC must jointly learn to reason about this task
and manage memory from scratch.
It has also become a standard setting for evaluating LM reasoning at scales accessible to academic labs~\citep{pan2025learning,gandhi2025cognitive, gandhi2024stream}.

\subsubsection{Main results}
\paragraph{NGC outperforms all baselines on test-set.}
\begin{figure}[!ht]
\centering
   \includegraphics[width=1.0\linewidth]{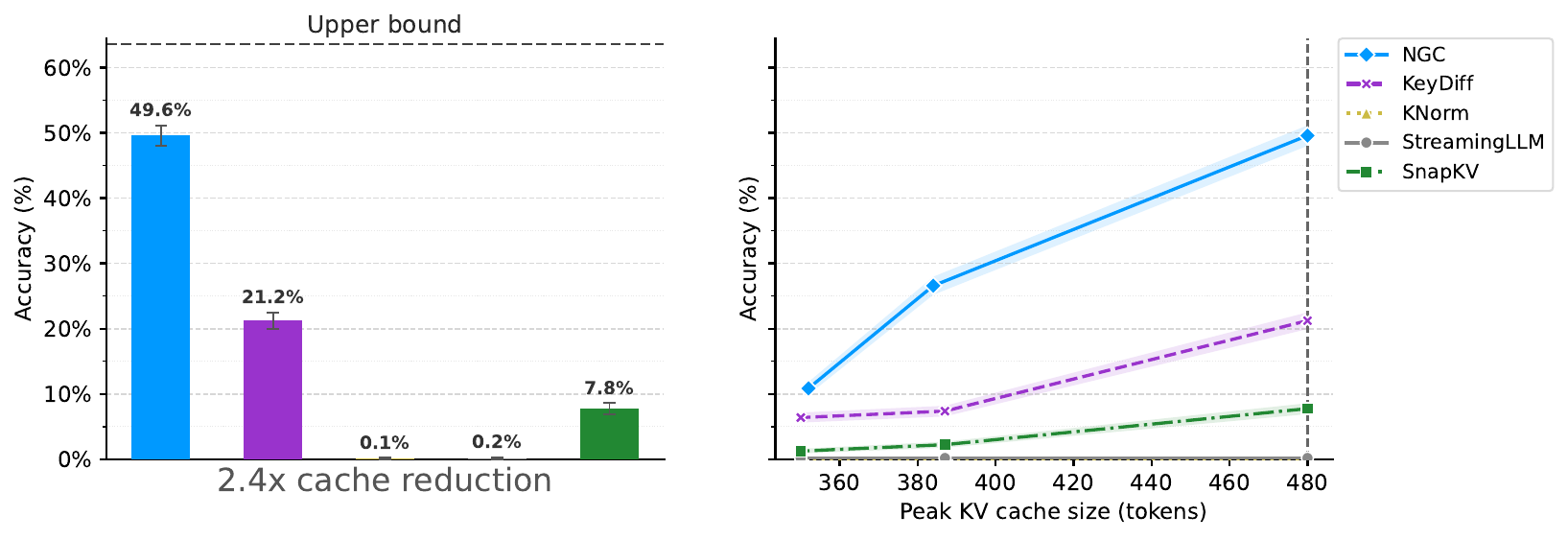}
\caption{\textbf{Comparing NGC against KV cache eviction baselines.} 
\textbf{(left)} NGC significantly outperforms baseline cache eviction methods. Error bars correspond to 1 SE and we report pass@1 accuracies. All methods evict 50\% of entries at each eviction round, corresponding to~2.4x reduction in peak KV cache size.
The dashed horizontal line corresponds to a model trained and evaluated with full cache.
\textbf{(right)}
We also test NGC's generalization to other eviction rates (smaller cache sizes) that it  was not explicitly trained for; the vertical dashed line indicates the minimum cache size during training. NGC pareto-dominates all other methods even at higher compression rates.
}
\label{fig:accuracy_bar}
\end{figure}
We report accuracy on 1024 held-out test problems.
In Figure~\ref{fig:accuracy_bar}, we compare NGC against several baseline methods at a 
50\% eviction rate per eviction round (corresponding to 2.4x reduction in peak cache size). NGC achieves 49.6\% accuracy, substantially outperforming all 
heuristic methods. 
This shows that NGC learns to reason and manage memory jointly, a more challenging task than learning to reason alone.

\paragraph{NGC generalizes to unseen eviction rates.}
In Figure~\ref{fig:accuracy_bar} (right), we evaluate NGC across a range of eviction 
rates more aggressive than those seen during training. 
NGC pareto-dominates all baselines, and its performance degrades gracefully beyond the training distribution until a very low cache size. 

\subsubsection{Ablations}
To understand the benefits of end-to-end training, we consider two ablations.
\paragraph{End-to-end training is essential.}
\begin{figure}[t]
    \centering
    \includegraphics[width=0.4\linewidth]{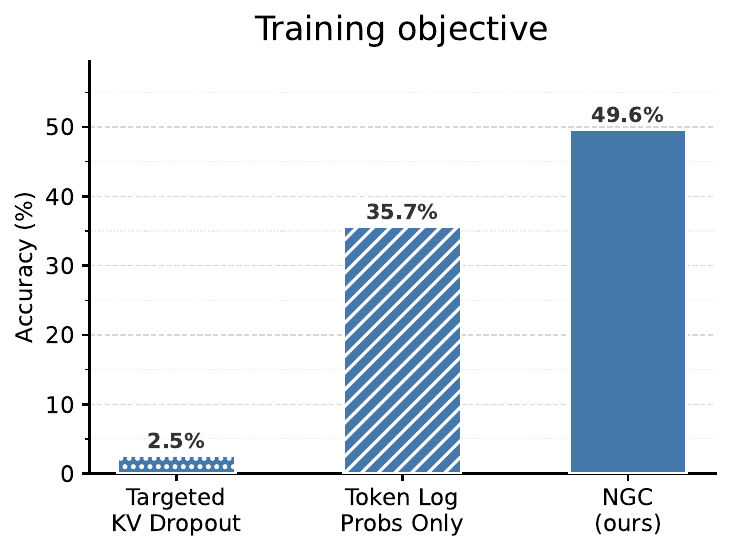}
\caption{\textbf{Ablating NGC design choices.}
We use two ablations to characterize the properties of end-to-end training.
{Targeted KV dropout} evicts cache entries during RL but ignores the off-policyness introduced by evictions.
{Token log-probs only} corrects for
off-policyness via replay masks but drops the eviction decision policy gradient term. 
Both significantly under-perform NGC.
}
\label{fig:ngc-ablations}
\end{figure}

Our first ablation evaluates whether exposing the model to cache eviction during training enables the model to handle evictions at test time.
Concretely, \textbf{Targeted KV dropout} evicts cache entries during training using NGC's scoring mechanism and applies the same curriculum, but optimizes only Equation~\ref{eq:token} with a standard causal mask---ignoring the off-policyness that eviction introduces; see Section~\ref{sec:replay}.
\textbf{Token log-probs only} corrects for this off-policyness using replay masks but drops the eviction policy gradient term $\mathcal{L}_{\text{mem}}$. 
Both significantly under-perform NGC (2.5\% and 35.7\% 
vs.\ 49.6\%), confirming that correct off-policy handling and end-to-end training of both reasoning and eviction are essential for performance.

\paragraph{Training dynamics.}
\begin{figure}[t]
    \centering
    \includegraphics[width=\linewidth]{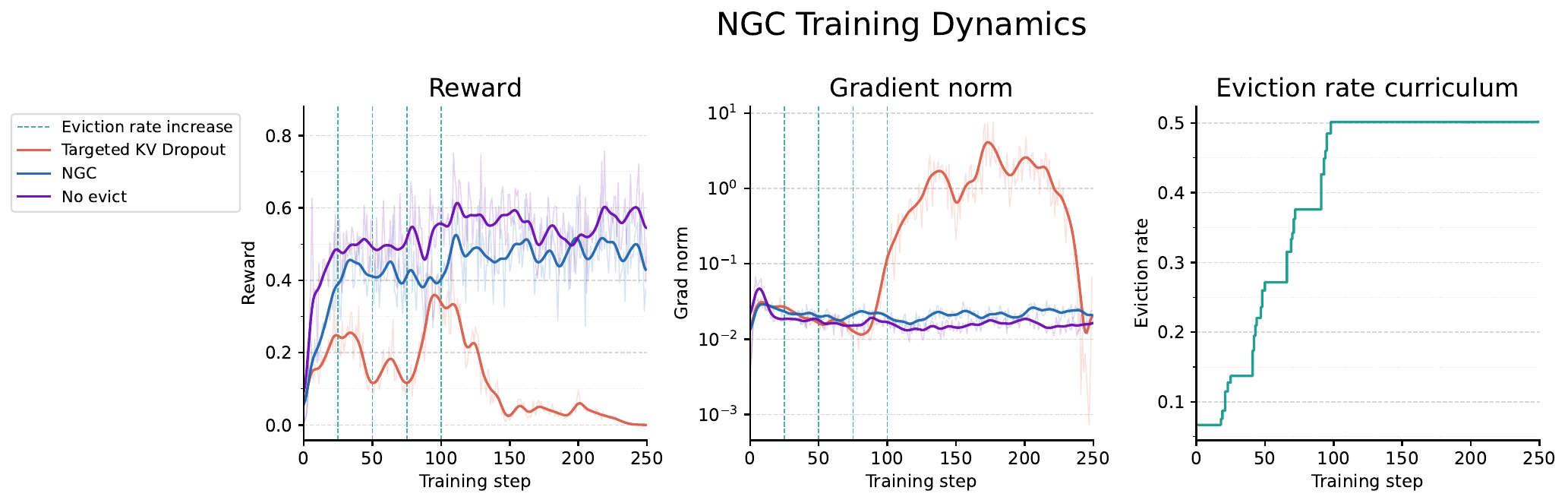}
    \caption{
        \textbf{NGC training dynamics on Countdown.}
        (\textbf{left}) Reward over training: NGC improves steadily but more slowly than training without any eviction (no evict) but eventually reaches a similar level of accuracy. 
        Targeted KV Dropout increases initially at a low eviction rate but collapses after around step~100 due to off-policyness.
        (\textbf{center}) Targeted Dropout exhibits exploding gradient norms coinciding with its reward collapse, whereas NGC and the no-eviction baseline remain stable throughout training.
        (\textbf{right}) The staircase eviction rate curriculum, shared across both NGC and targeted KV dropout, which gradually increases the percentage of KV cache evicted to $50\%$ over~$\sim\!100$ steps.
        Vertical dashed teal lines in the left and center panels mark the steps where the eviction rate begins to increase.
    }
    \label{fig:ngc-training-dynamics}
\end{figure}
Figure~\ref{fig:ngc-training-dynamics} shows reward and gradient norm over training. NGC 
improves steadily but more slowly than training without eviction (\text{no evict}). 
Targeted KV dropout initially improves at low eviction rates but collapses during training around step 100, coinciding with a spike in gradient norm. 
This illustrates the training 
instability created by off-policyness. 

\subsection{Budget-aware Interoception}
\label{sec:interoception}
If resource use is part of the task, then intuitively the model could benefit from ``sensing'' its budget constraints before reasoning. 
We ask whether explicitly conditioning the model on the eviction rate in the prompt improves its accuracy and generalization to various compression rates.
We call this \emph{budget-aware interoception}: like organisms sensing internal state (such as hunger) to maintain homeostasis, the model perceives its memory budget before it reasons.
\paragraph{Setup.}
To implement this, we simply modify the prompt during training and testing: we sample a single eviction rate
$\rho$ for a group of $G$ rollouts, as before, and simply append the same structured tag to every prompt in that group:
 \begin{center}
\ttfamily\small
\textless eviction\_rate\textgreater$\rho$\%\textless/eviction\_rate\textgreater
\end{center}
This approach takes inspiration from the ``difficulty-conditioned'' conjecturer from~\citet{poesia2024learning}. 
At inference time, we vary the eviction rate and compare against a standard NGC model trained under identical conditions but without the
eviction-rate tag in the prompt.

\paragraph{Results.}
\begin{figure}[!ht]
\centering
 \includegraphics[width=0.45\linewidth]{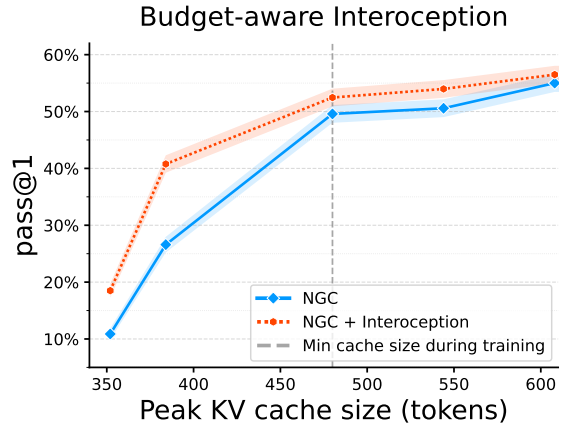}
\caption{\textbf{Budget-aware interoception.}
By training and evaluating the model with the eviction rate $\rho$ in the prompt{\ttfamily\small \textless eviction\_rate\textgreater$\rho$\%\textless/eviction\_rate\textgreater}, we see softened performance degradation as peak KV cache size decreases and generalization to stricter budgets. At aggressive budgets, this gives an 8-13\% boost in performance.
The dashed vertical line represents the minimum cache size used during training. 
}
\label{fig:interoception}
\end{figure}
Figure~\ref{fig:interoception} reports accuracy as a function of the peak cache size.
The interoceptive model matches or exceeds standard NGC across the
full training distribution, and the gap widens at more aggressive eviction
rates which require generalization beyond the training distribution.
The result also illustrates a broader principle: conditioning on resource constraints in the prompt is a simple way to enable a model to have a meta-awareness of its own computational constraints.

\subsection{Mathematical reasoning experiments}
\begin{figure}[t]
    \centering
    \includegraphics[width=\linewidth]{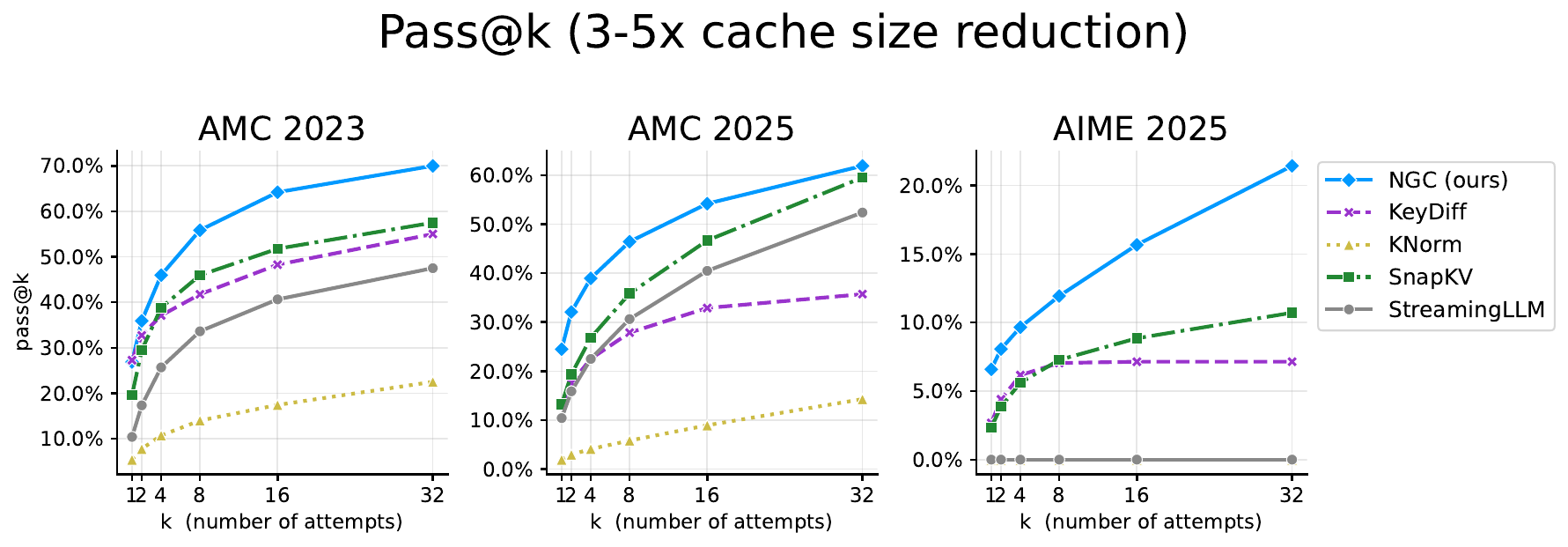}
\caption{\textbf{Pass@$k$ across math reasoning benchmarks.}
NGC consistently outperforms all inference-time eviction baselines across AMC 2023, AMC 2025, and AIME 2025; all methods evict 50\% at each eviction round, corresponding to a 3-5x peak cache size reduction.
}
\label{fig:passatk-all}
\end{figure}
\begin{figure}[t]
    \centering
    \includegraphics[width=\linewidth]{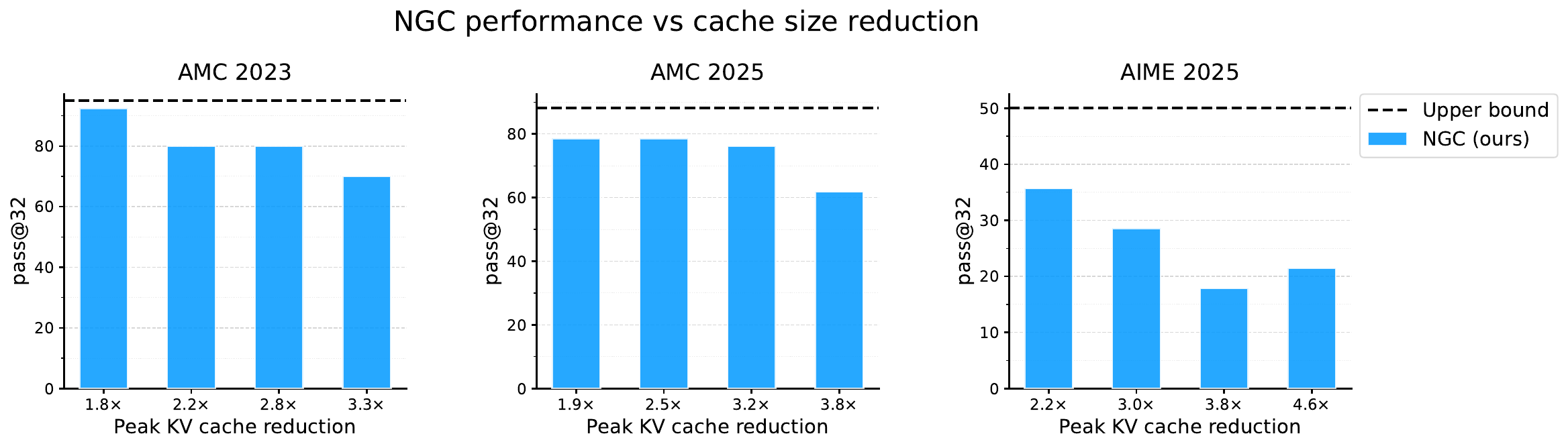}
\caption{\textbf{NGC pass@$32$ across varying cache size reductions.}
For moderate reductions in max cache size 2-3x, NGC can still maintain relatively strong accuracy relative to an upper bound that is trained with no eviction and uses no eviction during inference.
}
\label{fig:compression_ratio}
\end{figure}
To test whether the same approach generalizes to broader reasoning tasks, we now evaluate NGC on standard math competition benchmarks.

\paragraph{Setup.}
We train DeepSeek-R1-Distill-Qwen-1.5B on DAPO-17k~\citep{yu2025dapo}, a large-scale mathematical reasoning dataset spanning a broad range of problem types and difficulties. We evaluate on AMC 2023, AMC 2025, and AIME 2025 to assess whether NGC generalizes beyond its training distribution. 
We use the same baseline eviction approaches as before on top of a model trained without eviction.
We train with an eviction cadence of 350 tokens and a maximum response length of 1050, using 256 prompts per update step and 8 rollouts per prompt for a total of 469 steps.
At test time, we use $\text{top}\_\text{p}=0.95$ and temperature $0.6$ and a max completion length of 3850 tokens.

\paragraph{Results.} 
Figure~\ref{fig:passatk-all} reports pass@k~\citep{chen2021evaluatinglargelanguagemodelstrained} at a 3-5x cache reduction size (corresponding to 50\% eviction rate at every eviction round) across AMC 2023, AMC 2025, and AIME 2025. 
NGC consistently outperforms baselines across all three benchmarks; heuristic baselines can degrade to near-zero accuracy while NGC maintains a reasonable performance.
Moreover, heuristic baselines are inconsistent across datasets. 
The relative ranking of KeyDiff and SnapKV shifts across benchmarks: SnapKV fails 
catastrophically on Countdown but performs reasonably well on math reasoning tasks. 
This inconsistency illustrates why a fixed proxy for importance is fragile — it is task-dependent in ways that are difficult for a practitioner to anticipate.
NGC's consistently strong performance across tasks shows that end-to-end training allows the model to adapt to each task and discover what information to keep in a task-dependent manner.
In Figure~\ref{fig:compression_ratio}, we compare the pass@32 performance across different cache reduction sizes. 
We find that at more moderate (2-3x) cache reduction sizes, NGC can come close to matching the upper bound (no-eviction ceiling).
\section{Conclusion}
We introduced Neural Garbage Collection, a framework in which a language model jointly learns to reason and manage its own KV cache by optimizing outcome-based task reward alone. 
Experiments on Countdown and DAPO-17k show that NGC outperforms heuristic eviction baselines. 
NGC is a first step towards a broader vision: as models become more capable, they can also learn to use their own resources more efficiently. 
Under this view, efficiency is a behavior the model can acquire through the same end-to-end optimization pressure that drives capability. 
This could unlock the possibility of inverting the usual tradeoff between capability and cost.

\section{Acknowledgements}
We especially thank Kanishk Gandhi and Aditya Cowsik for detailed comments.
We also thank Hengyuan Hu, Neil Band, Omar Shaikh, Thomas Chen, and Cocolab members for helpful discussions as always.
This work was supported in part by ONR Grant N00014-22-1-2110, NSF Grant 2205084, and the Stanford Institute for Human-Centered Artificial Intelligence (HAI). EBF is a Biohub, San Francisco, Investigator.
\newpage
\bibliographystyle{abbrvnat}
\bibliography{references}
\newpage
\appendix
\section{Appendix}
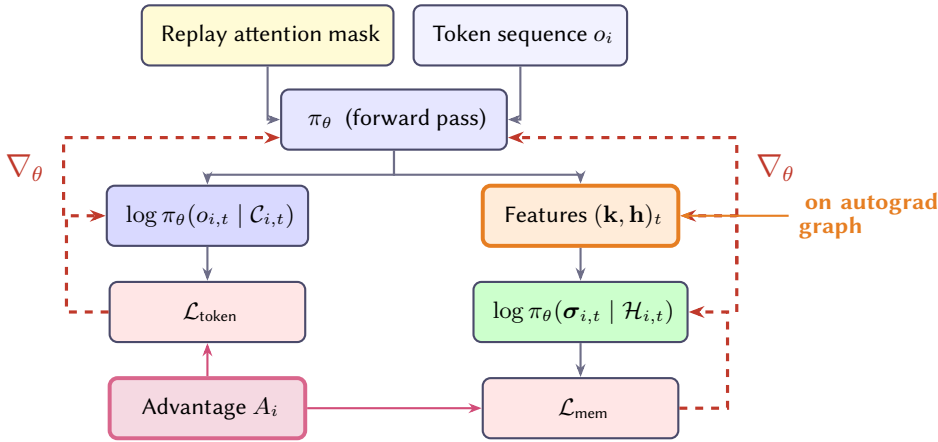
\begin{figure}[!ht]
\centering
\begin{tikzpicture}[
    scale=0.80, every node/.append style={transform shape},
    every node/.style={font=\sffamily},
    block/.style={draw=border, thick, rounded corners=3pt, minimum height=0.8cm,
                  inner sep=5pt, font=\small\sffamily},
    grad/.style={-{Stealth[length=6pt]}, very thick, gradcol, dashed},
    flow/.style={-{Stealth[length=5pt]}, thick, border!80},
]


\node[block, fill=yellow!20, minimum width=2.8cm] (mask) at (-1.8, 5.8)
    {Replay attention mask};

\node[block, fill=blue!5, minimum width=2.8cm] (tokens) at (2.4, 5.8)
    {Token sequence $o_i$};


\node[block, fill=blue!10, minimum width=3.0cm] (model) at (0.3, 4.4)
    {$\pi_\theta$\; (forward pass)};


\node[block, fill=blue!15, minimum width=2.6cm] (toklogp) at (-2.8, 2.8)
    {$\log \pi_\theta(o_{i,t} \mid \mathcal{C}_{i, t})$};

\node[block, fill=red!10, minimum width=2.6cm] (losstok) at (-2.8, 1.2)
    {$\mathcal{L}_{\text{token}}$};


\node[block, fill=orange!15, minimum width=2.6cm,
      draw=autogradcol, line width=1.5pt] (feat) at (3.4, 2.8)
    {Features $(\mathbf{k},\mathbf{h})_t$};

\node[block, fill=green!20, minimum width=2.6cm] (memlogp) at (3.4, 1.2)
    {$\log \pi_\theta(\boldsymbol{\sigma}_{i,t} \mid \mathcal{H}_{i,t})$};

\node[block, fill=red!10, minimum width=2.6cm] (lossmem) at (3.4, -0.4)
    {$\mathcal{L}_{\text{mem}}$};


\node[block, fill=purple!15, minimum width=2.6cm,
      draw=purple!60, line width=1.5pt] (adv) at (-2.8, -0.4)
    {Advantage $A_i$};

\draw[flow] (mask) |- (model);
\draw[flow] (tokens) |- (model);

\draw[flow] (model.south) -- ++(0, -0.4) -| (toklogp.north);
\draw[flow] (model.south) -- ++(0, -0.4) -| (feat.north);

\draw[flow] (toklogp) -- (losstok);

\draw[flow] (feat) -- (memlogp);
\draw[flow] (memlogp) -- (lossmem);

\draw[flow, purple!70] (adv.north)  -| (losstok.south);
\draw[flow, purple!70] (adv.east) |- (lossmem.west);

\draw[grad] (losstok.west) -- ++(-0.7, 0) |- (toklogp.west);
\draw[grad] ([xshift=-0.7cm]toklogp.west) |- ([yshift=-0.3cm]model.west);
\node[font=\large\bfseries, text=gradcol, anchor=east] 
    at ([xshift=-0.85cm, yshift=0.8cm]toklogp.west) {$\nabla_\theta$};


\draw[grad] (lossmem.east) -- ++(0.8, 0) |- (memlogp.east);
\draw[grad] ([xshift=0.8cm]memlogp.east) |- (feat.east);
\draw[grad] ([xshift=0.8cm, yshift=0.8cm]memlogp.east) |- ([yshift=-0.3cm]model.east);
\node[font=\large\bfseries, text=gradcol, anchor=west] 
    at ([xshift=0.95cm, yshift=2.4cm]memlogp.east) {$\nabla_\theta$};


\node[font=\small\sffamily\bfseries, text=autogradcol, anchor=west, 
      text width=3cm, align=left] (autograd-label)
    at ([xshift=1.8cm, yshift=0.0cm]feat.east) {on autograd\\[-2pt] graph};

\draw[autogradcol, thick, -{Stealth[length=4pt]}] 
    (autograd-label.west) -- (feat.east);

\end{tikzpicture}
\caption{
\textbf{Replay masks route gradients through eviction decisions.}
We perform a forward pass over the tokens with $\pi_{\theta}$ using replay attention masks (see Figure~\ref{replay-mask}).
Both losses share the same group-normalized advantage $A_i$ (purple) but differ in their log-probabilities: $\mathcal{L}_{\text{token}}$ uses the per-token next-token log-probs, while $\mathcal{L}_{\text{mem}}$ uses the Gumbel-top-$k$ log-prob of the eviction decisions.
Crucially, since the inputs that are used to make eviction decisions $(\mathbf{k}, \mathbf{h})_t$ are collected during this forward pass, they are on the autograd graph.
Therefore, gradients from $\mathcal{L}_{\text{mem}}$ flow back into $\theta$ and, as a consequence, eviction decisions are treated as a \emph{native} action of the model that is optimized end-to-end just like tokens in the chain-of-thought.
}
\label{fig:autograd}
\end{figure}
\subsection{Hyperparameters}
\label{sec:hypers}
\begin{table}[ht]
\centering
\caption{Countdown training hyperparameters.}
\label{tab:hyperparams}
\begin{tabular}{lc}
\toprule
\textbf{Parameter} & \textbf{Value} \\
\midrule
Maximum response length & 1024 tokens \\
Sampling temperature & 0.9 \\
Top-$k$ & 50 \\
\midrule
Optimizer & AdamW \\
Adam parameters $(\beta_1, \beta_2)$ & $(0.9, 0.95)$ \\
Adam $\epsilon$ & $1 \times 10^{-15}$ \\
Gradient norm clipping & 1.0 \\
Learning rate scheduler & Constant \\
Learning rate & $5 \times 10^{-6}$ \\
KL penalty coefficient & 0.0 \\\bottomrule
\end{tabular}
\end{table}

\begin{table}[h]
\centering
\caption{DAPO-17k training hyperparameters.}
\label{tab:hyperparams-dapo}
\begin{tabular}{lc}
\toprule
\textbf{Parameter} & \textbf{Value} \\
\midrule
Maximum response length & 1050 tokens \\
Sampling temperature & 1.0 \\
Top-$k$ & disabled \\
\midrule
Optimizer & AdamW \\
Adam parameters $(\beta_1, \beta_2)$ & $(0.9, 0.999)$ \\
Adam $\epsilon$ & $1 \times 10^{-8}$ \\
Weight decay & 0.0 \\
Gradient norm clipping & 1.0 \\
Learning rate scheduler & Constant \\
Learning rate & $5 \times 10^{-6}$ \\
KL penalty coefficient & 0.0 \\
\bottomrule
\end{tabular}
\end{table}

\textbf{Min response length penalty.} 
Completions whose total length $T_i = |\text{prompt}_i| + |\text{completion}_i|$ falls before the first eviction round do not participate in an eviction round and thus carry no signal. 
For the \texttt{DAPO-17k} experiments, we found that it can be helpful
to set $r_i=0$ if the response length $T_i < L$.
This downgrades short rollouts that happened to receive positive reward, preserving the group size $G$ and the natural scale of negative rewards while restoring a clean learning signal for the eviction head.

\subsection{Cache size converges to a constant under grow-then-evict dynamics}
\label{appendix:proof}

\begin{proposition}[Steady-state maximum cache size under periodic eviction]
Consider a cache that undergoes periodic eviction as follows: every $\delta$ newly generated tokens, an eviction round keeps a $(1-\varepsilon)$ fraction of the current cache entries in each layer and permanently removes the remaining $\varepsilon$ fraction, where $\varepsilon \in (0,1]$.
Assume the prefill length is less than $\delta$.

Let $c_t$ denote the cache size in a \emph{single layer} immediately before eviction round $t$.
Then the sequence $\{c_t\}$ converges to the unique fixed point
\[
c^\star = \frac{\delta}{\varepsilon}.
\]
Therefore, since we keep the same fraction in each layer,
the total KV cache size across all $\mathit{L}$ layers immediately before an eviction round converges to
\[
\mathit{C}^{\star} = \mathit{L} \mathit{c^\star} = \mathit{L}\mathit{\frac{\delta}{\varepsilon}}.
\]

\end{proposition}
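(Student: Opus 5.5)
The plan is to write the recursion for $c_t$ explicitly and show it is an affine contraction. Immediately after eviction round $t$ the single-layer cache holds $(1-\varepsilon)c_t$ entries; the next round fires after $\delta$ further tokens have been appended. This gives
\[
c_{t+1} = (1-\varepsilon)\,c_t + \delta, \qquad t \ge 1,
\]
with initial condition $c_1 = \delta$. The initial condition holds because the first round fires exactly when the cache, prefill included, reaches $\delta$ entries, and the assumption that the prefill is shorter than $\delta$ ensures this threshold is reached during generation and not already exceeded by the prompt. I will ignore integer rounding of $(1-\varepsilon)c_t$ and treat the kept fraction as exact, as the statement implicitly does. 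If the block structure is to be respected, I would add a remark that rounding perturbs each step by at most one block, so the limit holds up to an $O(b/\varepsilon)$ error.

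Next, I identify the fixed point. Solving $c = (1-\varepsilon)c + \delta$ gives $\varepsilon c = \delta$, so $c^\star = \delta/\varepsilon$. Since $\varepsilon > 0$, this solution is unique.

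For convergence, I subtract the fixed-point equation from the recursion. This gives $c_{t+1} - c^\star = (1-\varepsilon)(c_t - c^\star)$, and hence the closed form
\[
c_t - c^\star = (1-\varepsilon)^{t-1}\,(c_1 - c^\star) = (1-\varepsilon)^{t-1}\,\delta\Bigl(1 - \tfrac{1}{\varepsilon}\Bigr).
\]
Because $\varepsilon \in (0,1]$, we have $0 \le 1-\varepsilon < 1$, so the error decays geometrically to zero and $c_t \to c^\star$. In the edge case $\varepsilon = 1$ the sequence is constant at $\delta = c^\star$. The closed form also shows that $c_t$ increases monotonically toward $c^\star$ from below, so $c^\star$ is in fact the supremum of the pre-eviction sizes. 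This justifies calling it the steady-state maximum.

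Finally, every layer evicts the same fraction on the same schedule, so each of the $L$ layers follows the identical recursion. The total pre-eviction cache is therefore $L c_t \to L\delta/\varepsilon$. There is no real obstacle in this argument. The only point needing care is setting up the recursion and its initial condition correctly, in particular confirming that the cache count just before a round includes the survivors plus exactly $\delta$ new tokens, and that the prefill assumption pins down $c_1$.
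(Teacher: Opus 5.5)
Your proof is correct and follows the paper's argument: the same affine recursion $c_{t+1} = (1-\varepsilon)c_t + \delta$, the same fixed-point computation, and convergence because $|1-\varepsilon| < 1$. The only difference is that you unroll the recursion explicitly from $c_1 = \delta$ where the paper just cites the contraction, and this extra step shows the pre-eviction sizes increase monotonically to $c^\star$, which justifies the word ``maximum'' in a way the paper's proof leaves implicit.
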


\begin{proof}
By construction, immediately after eviction round $t$, the cache size in a single layer is $(1-\varepsilon)c_t$.
Before the next eviction round, the model generates $\delta$ new tokens, so we have the following recursion
\[
c_{t+1} = (1-\varepsilon)c_t + \delta.
\]

Since $\varepsilon \in (0,1]$, we have $|1-\varepsilon| < 1$, so this recursion is a contraction and therefore converges to a unique fixed point.
The fixed point condition gives $c_{t+1} = c_t = c^\star$
\[
c^\star = (1-\varepsilon)c^\star + \delta.
\]
Rearranging,
\[
\varepsilon c^\star = \delta,
\qquad\Rightarrow\qquad
c^\star = \frac{\delta}{\varepsilon}.
\]

Since each of the $\mathit{L}$ layers has the same cache size under these dynamics, the total KV cache size immediately before eviction is
\[
\mathit{C}^{\star} = \mathit{L}\mathit{\frac{\delta}{\varepsilon}}.
\]
as desired.
\end{proof}

\subsection{Implementation Details}
\label{appendix:code}
We maintain a custom fork of the Huggingface implementation of Qwen2 as well as a custom fork of the  \texttt{GRPOTrainer} from TRL~\citep{vonwerra2022trl}.
We describe some of the key modifications below.

\paragraph{Per-Layer Attention Masks.}
Because different layers keep different subsets of KV entries after eviction, a single attention mask shared across layers is insufficient, for applying the replay mask mechanism described in Section~\ref{sec:replay}.
The standard Huggingface transformers library does not support this.
We extend the transformer decoder layers to accommodate layer-level attention masks.
We store a list of per-layer attention masks inside a modified \texttt{DynamicCache} object (\texttt{attention\_mask\_list}); at each forward pass, we detect whether we have a layer level mask and, if so, use it in the appropriate decoder layer.

\paragraph{Reconstructing Replay Attention Masks from Retention Decisions.}
To re-run the model under fixed eviction decisions during the replay pass, we must reconstruct the exact (per-layer level) attention masks that would have been in effect at each token position during the original rollout.
Given per-layer retention decisions $\{d_r^{(\ell)} \in \{0,1\}^{B \times |\mathcal{A}_r|}\}_{r=0}^{R-1}$, where $B$ is the batch size, $R = \lceil T/L \rceil - 1$ is the number of eviction rounds, $T$ is the total sequence length, $L$ is the fixed eviction period (denoted $\delta$ in the main text), and $\mathcal{A}_r$ denotes the alive set at round $r$, we construct a dense Boolean mask $M^{(\ell)} \in \{0,1\}^{B \times T \times T}$ independently for each layer $\ell$ as follows.

The sequence is partitioned into $\lceil T/L \rceil$ contiguous blocks, where block $r$ spans global token indices $[rL,\, \min((r+1)L,\, T))$. Within each block, queries attend causally to earlier tokens in the same block via the standard lower-triangular mask. The alive set is initialized as $\mathcal{A}_0 = \text{block}_0$ and updated after each eviction round: applying $d_r^{(\ell)}$ to $\mathcal{A}_r$ yields the retained set
\begin{equation}
    \texttt{kept}_r^{(\ell)} = \bigl\{ i \in \mathcal{A}_r : d_{r,i}^{(\ell)} = 1 \bigr\},
\end{equation}
after which the alive set for the next round is $\mathcal{A}_{r+1} = \texttt{kept}_r^{(\ell)} \cup \text{block}_{r+1}$. All queries in block $r+1$ attend to every token in $\texttt{kept}_r^{(\ell)}$, in addition to causally attending within block $r+1$ itself. After iterating over all $R$ block boundaries, the diagonal of $M^{(\ell)}$ is set to 1 to ensure every token attends to itself. The resulting masks $\{M^{(\ell)}\}_\ell$ are injected into our modified \texttt{DynamicCache}, so that the replay forward pass attends to exactly the same context at each layer as existed during the original rollout.

Note that we need 2D attention masks rather than a broadcastable 1D key-side mask. Standard causal masking is compactly representable because visibility is monotonic in query position — once a key becomes visible, it stays visible. Eviction breaks this monotonicity: a key evicted at round $r$ is visible to queries before round $r$ but invisible to queries after, so visibility is neither a function of key position alone (unlike a padding token) nor monotone in query position.

\subsection{Compression via Meta-Tokens}
\definecolor{ngc@vis}{RGB}{60,130,200}
\definecolor{ngc@evictA}{RGB}{220,80,80}
\definecolor{ngc@sumcol}{RGB}{80,170,100}
\definecolor{ngc@border}{RGB}{80,80,80}
\definecolor{ngc@bgcol}{RGB}{248,248,252}
\definecolor{ngc@gradc}{RGB}{200,120,50}

\begin{figure}
  \centering
\begin{tikzpicture}[
  background rectangle/.style={fill=ngc@bgcol}, show background rectangle,
  tok/.style={
    draw=ngc@border!50, fill=white, rounded corners=4pt,
    minimum width=1.4cm, minimum height=0.85cm,
    font=\normalsize\sffamily, thick
  },
  evicttok/.style={tok, draw=ngc@evictA!60, fill=ngc@evictA!8,
    text=ngc@evictA!70, font=\normalsize\sffamily},
  sumtok/.style={tok, draw=ngc@sumcol!80, fill=ngc@sumcol!25, text=ngc@sumcol!95,
    font=\normalsize\sffamily\bfseries, minimum width=1.55cm, line width=1pt},
  keeptok/.style={tok, draw=ngc@vis!70, fill=ngc@vis!12, text=ngc@vis!85},
  attn/.style={-{Stealth[length=4.5pt]}, semithick, ngc@vis!55},
  sumattn/.style={-{Stealth[length=4.5pt]}, semithick, ngc@sumcol!65},
  garr/.style={-{Stealth[length=6pt]}, thick, ngc@gradc, dashed},
]
\def\sp{1.55}
\def\tokY{0}

\node[keeptok]  (p0) at (0*\sp, \tokY) {$t_0$};
\node[evicttok] (p1) at (1*\sp, \tokY) {\sout{$t_1$}};
\node[keeptok]  (p2) at (2*\sp, \tokY) {$t_2$};
\node[evicttok] (p3) at (3*\sp, \tokY) {\sout{$t_3$}};
\node[sumtok]   (gs) at (4*\sp, \tokY) {$\mathbf{g}$};
\node[keeptok]  (k5) at (5*\sp, \tokY) {$t_5$};
\node[keeptok]  (k6) at (6*\sp, \tokY) {$t_6$};

\draw[sumattn] (k6.north) to[out=135, in=45]  (gs.north);
\draw[attn]    (k6.north) to[out=125, in=55]  (p2.north);
\draw[attn]    (k6.north) to[out=115, in=65]  (p0.north);

\node[font=\small\sffamily\itshape, text=ngc@border!85, align=left, anchor=west]
  at ($(k6.east)+(0.3,0.4)$) {queries after $\mathbf{g}$ attend\\to $\mathbf{g}$ and surviving keys};

\node[font=\small\sffamily, text=ngc@gradc!95,
      draw=ngc@gradc!50, fill=ngc@gradc!10, rounded corners=3pt,
      inner sep=4pt, line width=0.6pt]
  (rew) at (gs |- 0,-2.1) {task reward};
\draw[garr] (rew.north) -- (gs.south);
\node[font=\scriptsize\sffamily, text=ngc@gradc!90, align=center, anchor=west]
  at ($(rew.east)+(0.2,0)$) {$\nabla_{\!\theta}$ flows into\\ gist embedding};

\end{tikzpicture}
\caption{\textbf{NGC gives compression meta-tokens for free.}
Tokens after the summary token $\mathbf{g}$ can only attend to $\mathbf{g}$ and the KV cache entries the model decides to keep.
Since $\mathbf{g}$ is just a token, it receives a per-token policy gradient.
This gives the model an incentive, via the RL objective, to pack useful information into $\mathbf{g}$'s representation.
If all prefix entries before $\mathbf{g}$ are evicted, the induced attention mask is exactly that of a gist token~\citep{mu2024gist}.}
\label{fig:gist}
\end{figure}
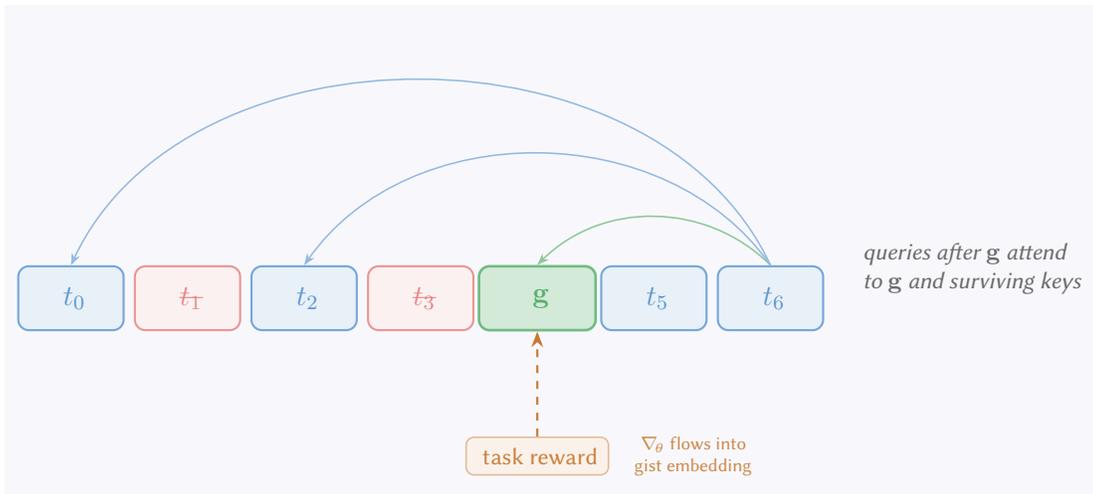
Eviction enables forgetting but does not directly allow for consolidating information.
Consider a model mid-way through a long arithmetic derivation: in addition to dropping earlier steps, it might be helpful to take stock of what has been established before continuing.

We show a simple way to enable such consolidation within the NGC framework, requiring no additional objectives or architecture changes. We illustrate this idea in Figure~\ref{fig:gist}. Immediately before an eviction round, we force the model to emit a special meta-token $\mathbf{g}$ via constrained decoding; its embedding is initialized from the average of the ``tl;dr'' tokens but is otherwise an ordinary entry in the vocabulary.

The subtlety is that $\mathbf{g}$ is emitted deterministically at inference, so its probability is degenerate and admits no sampled log-probability.
During training we score $\mathbf{g}$ as if it were sampled from the model's next-token distribution at that position; this gives it a per-token policy gradient. 
Given this signal, eviction supplies the incentive to use $\mathbf{g}$ strategically. Once cache entries preceding $\mathbf{g}$ are removed, its key--value representation becomes a channel through which information from the evicted entries can still influence subsequent computation. 
The optimization pressure, in principle, teaches the model to route useful information about the prefix through $\mathbf{g}$---yielding gist-like behavior without a separate compression objective.

NGC generalizes the gist token construction in \citet{mu2024gist}: 
gist tokens only compress system prompts and require a distillation 
objective to train, while NGC can exhibit the same behavior for free as a 
consequence of learning to reason and can compress both the prefill and generation tokens.
More precisely, when all entries before $\mathbf{g}$ are evicted, the resulting attention mask for tokens after $\mathbf{g}$ is equivalent to that of the gist attention mask.
This approach follows 
a pattern in which emitting a special token can induce structured 
behavior in an LM~\citep{goyal2024pause,zelikman2024quietstar}.

\paragraph{Gist tokens via LogitsProcessors}
We implement summary tokens using Huggingface's support for logit processors.
At the end of each decode step, the model forward pass checks whether the next position would trigger eviction. 
If so, it sets a boolean flag \texttt{\_force\_summary\_next = True}. On the subsequent call to \texttt{generate()}, the logits processor intercepts the vocabulary logits, fills them with $-\infty$, and assigns a score of $0$ to the summary token id, deterministically forcing its emission. The flag is then cleared.

\end{document}